\documentclass{article}

\usepackage{microtype}
\usepackage{graphicx}
\usepackage{rotating} % for sideways tables
\usepackage{placeins} % for floatbarrier
\usepackage{subcaption}
\usepackage{booktabs} % for professional tables

\usepackage[shortlabels]{enumitem}
\usepackage{multirow} % for table cells spanning multiple rows
\usepackage{makecell}  % For word-wrapping in tables

\usepackage{hyperref}

\usepackage[preprint]{icml2026}

\usepackage{amsmath}
\usepackage{amssymb}
\usepackage{mathtools}
\usepackage{amsthm}
\usepackage{thmtools}

\usepackage[capitalize,noabbrev]{cleveref}

\renewcommand{\algorithmicrequire}{\textbf{Input:}}
\renewcommand{\algorithmicensure}{\textbf{Output:}}

\DeclareRobustCommand{\PRE}[3]{#2} % set up for citation

\theoremstyle{plain}
\newtheorem{theorem}{Theorem}[section]

\newtheorem{lemma}[theorem]{Lemma}

\theoremstyle{definition}

\newtheorem{assumption}[theorem]{Assumption}
\theoremstyle{remark}
\newtheorem{remark}[theorem]{Remark}

\newtheorem{innercustomlemma}{Lemma}
\newenvironment{manuallemma}[1]{%
  \renewcommand\theinnercustomlemma{#1}%
  \renewcommand\theHinnercustomlemma{manual.#1}%
  \innercustomlemma
}{\endinnercustomlemma}

\crefname{assumption}{Assumption}{Assumptions}
\Crefname{assumption}{Assumption}{Assumptions}

\usepackage{amsmath,amsfonts,bm}

\def\rvu{{\mathbf{u}}}

\iftutex
    \def\va{{\mathbfit{a}}}
    \def\vb{{\mathbfit{b}}}

    \def\vg{{\mathbfit{g}}}

    \def\vu{{\mathbfit{u}}}

    \def\vx{{\mathbfit{x}}}
    \def\vy{{\mathbfit{y}}}

    \def\mA{{\mathbfit{A}}}

    \def\mI{{\mathbfit{I}}}

    \def\mP{{\mathbfit{P}}}

\else
    \def\va{{\bm{a}}}
    \def\vb{{\bm{b}}}

    \def\vg{{\bm{g}}}

    \def\vu{{\bm{u}}}

    \def\vx{{\bm{x}}}
    \def\vy{{\bm{y}}}

    \def\mA{{\bm{A}}}

    \def\mI{{\bm{I}}}

    \def\mP{{\bm{P}}}

\fi

\def\sR{{\mathcal{R}}}
\def\sS{{\mathcal{S}}}

\def\sU{{\mathcal{U}}}

\def\sX{{\mathcal{X}}}
\def\sY{{\mathcal{Y}}}

\newcommand{\E}{\mathbb{E}}
\newcommand{\R}{\mathbb{R}}

\newcommand{\Var}{\mathrm{Var}}

\DeclareMathOperator*{\argmax}{arg\,max}
\DeclareMathOperator*{\argmin}{arg\,min}

\def\sYfeas{\sY_\mathrm{feas}}
\def\sUfeas#1{\sU_\mathrm{feas}^{#1}}
\def\sXd{\sX_\mathrm{disc}}

\icmltitlerunning{Maximizing Reliability with Bayesian Optimization}

\begin{document}

\twocolumn[
  \icmltitle{Maximizing Reliability with Bayesian Optimization}

  % It is OKAY to include author information, even for blind submissions: the
  % style file will automatically remove it for you unless you've provided
  % the [accepted] option to the icml2026 package.

  % List of affiliations: The first argument should be a (short) identifier you
  % will use later to specify author affiliations Academic affiliations
  % should list Department, University, City, Region, Country Industry
  % affiliations should list Company, City, Region, Country

  % You can specify symbols, otherwise they are numbered in order. Ideally, you
  % should not use this facility. Affiliations will be numbered in order of
  % appearance and this is the preferred way.
  \icmlsetsymbol{equal}{*}

  \begin{icmlauthorlist}
    \icmlauthor{Jack M. Buckingham}{mathsys}
    \icmlauthor{Ivo Couckuyt}{ugent,imec}
    \icmlauthor{Juergen Branke}{wbs}
  \end{icmlauthorlist}

  \icmlaffiliation{mathsys}{Mathsys CDT, University of Warwick, Coventry, UK}
  \icmlaffiliation{wbs}{Warwick Business School, University of Warwick, Coventry, UK}
  \icmlaffiliation{ugent}{Faculty of Engineering and Architecture, Ghent University, Ghent, Belgium}
  \icmlaffiliation{imec}{imec, Leuven, Belgium}

  \icmlcorrespondingauthor{Jack M. Buckingham}{jack.buckingham@warwick.ac.uk}
  \icmlcorrespondingauthor{Juergen Branke}{juergen.branke@wbs.ac.uk}

  % You may provide any keywords that you find helpful for describing your
  % paper; these are used to populate the "keywords" metadata in the PDF but
  % will not be shown in the document
  \icmlkeywords{Bayesian optimization, knowledge gradient, Thompson sampling, reliability, feasibility, yield optimization, rare events}

  \vskip 0.3in
]

% this must go after the closing bracket ] following \twocolumn[ ...

% This command actually creates the footnote in the first column listing the
% affiliations and the copyright notice. The command takes one argument, which
% is text to display at the start of the footnote. The \icmlEqualContribution
% command is standard text for equal contribution. Remove it (just {}) if you
% do not need this facility.

% Use ONE of the following lines. DO NOT remove the command.
% If you have no special notice, KEEP empty braces:
\printAffiliationsAndNotice{}  % no special notice (required even if empty)
% Or, if applicable, use the standard equal contribution text:
% \printAffiliationsAndNotice{\icmlEqualContribution}

\begin{abstract}
  Bayesian optimization~(BO) is a popular, sample-efficient technique for expensive, black-box optimization.
  One such problem arising in manufacturing is that of maximizing the reliability, or equivalently minimizing the probability of a failure, of a design which is subject to random perturbations -- a problem that can involve extremely rare failures ($P_\mathrm{fail} = 10^{-6}-10^{-8}$).
  In this work, we propose two BO methods based on Thompson sampling and knowledge gradient, the latter approximating the one-step Bayes-optimal policy for minimizing the logarithm of the failure probability.
  Both methods incorporate importance sampling to target extremely small failure probabilities.
  Empirical results show the proposed methods outperform existing methods in both extreme and non-extreme regimes.
\end{abstract}

\section{Introduction} \label{sec:introduction}
The reliability of a system is quantified by the probability that a design satisfies all performance constraints when that performance depends on random inputs or disturbances.
In many applications it is natural to seek to maximize reliability, or equivalently minimize the probability of failure.
Examples include yield optimization of electronic components \citep{wang2018yieldopt,weller2022fast}, the design of mechanical components \citep{huang2010egoReliability,bichon2012efficient}, the optimization of advertising campaigns \citep{betlei2024rladvertising} and the choice of treatment dosage in a medical setting \citep{durham1998sequential}.

The linear version of this problem was first studied as the `P Model' variant of chance constrained programming by \citet{charnes1963chanceconstraints}.
However, the examples given above are generally non-linear and non-convex, with data acquisition relying on resource-intensive simulation or real-world experiments.

In some cases, the required reliability is very high, meaning importance sampling or other techniques for estimating rare event probabilities must be applied. For example, in electronics, because millions of SRAM cells are stacked together, the failure rate of an individual cell must be at most \(10^{-8}\) -- \(10^{-6}\) to ensure a reasonable failure rate for the whole \citep{sun2015sss}.
Additionally, in both simulations and experiments, gradients are typically unavailable, and we are limited to making a small number of carefully chosen evaluations.
Hence, Bayesian optimization~(BO) is an ideal algorithm \citep{frazier2018botutorial,garnett2023bayesopt}.

Minimizing the failure probability hinges on learning an accurate representation of the relevant parts of the boundary which separates the feasible and infeasible regions. In reliability engineering this is known as the \emph{limit state surface}.
Consider the design of a mechanical or electronic component, where it is not possible to exactly manufacture the nominal design and instead the final design is a perturbation of this nominal design. An efficient optimization algorithm would run experiments on designs which are close to both the limit state surface and the nominal design.
However, existing methods fail to capitalize on this and instead explore the whole limit state surface evenly.
This can be seen in \cref{fig:example-problem}, where knowledge gradient (one algorithm proposed in this paper) is compared with an alternative proposed in \citep{huang2010egoReliability}.

\begin{figure*}[t]
    \centering
    \includegraphics[width=\linewidth,height=.3\linewidth]{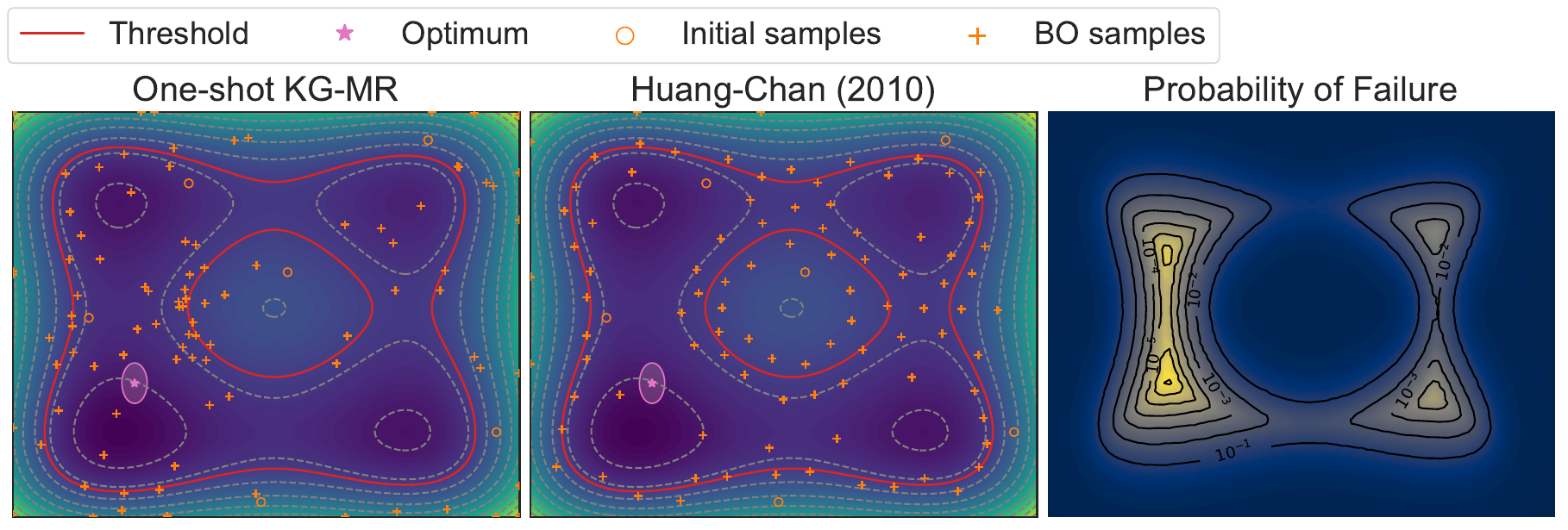}
    \caption[Contour plots showing the Styblinski-Tang (2D) reliability maximization problem.]{Contour plots showing the Styblinski-Tang (2D) problem where the nominal design is perturbed by adding a normally distributed random variable, and perturbed designs with a value above the threshold are classed as failures. The left and middle panels show the black-box function \(f\) and the threshold level set, along with the observations collected by one-shot knowledge gradient for maximal reliability (KG-MR) and the algorithm of \citet{huang2010egoReliability}. The optimal nominal design is shown by the pink dot, surrounded by an ellipse indicating one standard deviation of the normally distributed perturbation. The right panel shows the probability of failure over the domain, as defined in~\eqref{eq:fail-prob-additive}.}
    \label{fig:example-problem}
\end{figure*}

\paragraph{Contributions}
\begin{enumerate}
    \item We propose two acquisition strategies to extend BO to the problem of maximizing reliability, one based on Thompson sampling (called TS-MR) and the other on knowledge gradient (called KG-MR).
    \item We provide one approximation for TS-MR, and two approximations for KG-MR, all of which incorporate importance sampling to extend applicability to the regime of extremely small optimal failure probabilities.
    \item We demonstrate that both TS-MR and KG-MR outperform the state-of-the-art on many examples, with one-shot KG-MR being the most reliable top performer.
\end{enumerate}

\section{Problem Definition} \label{sec:problem-statement}
Let \(\sX \subset \R^{d_x}\) and \(\sY \subset \R^{d_y}\) and \(\sU \subset \R^{d_u}\) denote the spaces of the nominal design, perturbed design and perturbation variables.
Let \(f : \sY \to \R\) be a black-box function and suppose that we can make expensive observations \(v = f(\vy)\) at points \(\vy \in \sY\) of our choice.
Let \(\rvu\) be a random vector taking values in \(\sU\), and let \(\vg: \sX \times \sU \to \sY\) be a known function mapping pairs of nominal designs \(\vx\) and perturbations \(\vu\) to perturbed designs \(\vy\).
Let \(c \in \R\) be a threshold such that perturbed designs \(\vy \in \sY\) with \(f(\vy) \geq c\) are regarded as infeasible.
Finally, let \(\sYfeas \subset \sY\) be a subspace such that values \(\vy \in \sY \setminus \sYfeas\) are regarded as infeasible regardless of the value of \(f(\vy)\).
This includes simpler constraints on \(\vy\) which are not a black-box, such as bound constraints on a normally perturbed nominal design.
The problem tackled in this paper is to minimize the failure probability associated with the nominal design,
\begin{subequations}\label{eq:problem-statement}
\begin{gather}
    \min_{\vx \in \sX} P(\vx) \\
    P(\vx) \coloneqq \mathbb{P}_\rvu\Bigl( f \circ \vg(\vx, \rvu) \geq c \;\text{ or }\; \vg(\vx, \rvu) \notin \sYfeas\Bigr). \label{eq:fail-prob-general}
\end{gather}
\end{subequations}
Here \(f \circ \vg(\vx, \rvu) = f(\vg(\vx, \rvu))\) denotes function composition and \(\mathbb{P}_\rvu\) denotes probability over \(\rvu \sim \mathbb{P}_\rvu\).

The following lemma, which is proved in \cref{sec:theoretical-results}, ensures that Problem~\eqref{eq:problem-statement} has at least one solution.
\begin{lemma}\label{thm:true-fail-prob-cts}
    Suppose that \(f: \sY \to \R\) is continuous, \({\vg : \sX \times \sU \to \sY}\) is continuous and that the distribution of \(\mathbb{P}_\rvu\) has no mass on the limit state surface or boundary of \(\sYfeas\). Then the failure probability \(P : \sX \to [0, 1]\) is continuous. If also \(\sX\) is compact then \(P\) attains its minimum value for some \(\vx^* \in \sX\).
\end{lemma}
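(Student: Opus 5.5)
We write \(P(\vx) = \E_\rvu[\mathbf{1}_F(\vg(\vx,\rvu))]\), where \(F \coloneqq \{\vy \in \sY : f(\vy) \geq c\} \cup (\sY \setminus \sYfeas)\) is the failure set, and prove continuity sequentially with dominated convergence. Fix \(\vx \in \sX\) and a sequence \(\vx_n \to \vx\) in \(\sX\). The integrands \(\mathbf{1}_F(\vg(\vx_n,\rvu))\) are bounded by \(1\), which is integrable under the probability measure \(\mathbb{P}_\rvu\). It therefore suffices to show that \(\mathbf{1}_F(\vg(\vx_n,\vu)) \to \mathbf{1}_F(\vg(\vx,\vu))\) for \(\mathbb{P}_\rvu\)-almost every \(\vu\), since then \(P(\vx_n) \to P(\vx)\).

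For the pointwise convergence, fix \(\vu\) and set \(\vy = \vg(\vx,\vu)\) and \(\vy_n = \vg(\vx_n,\vu)\). Continuity of \(\vg\) gives \(\vy_n \to \vy\). The key topological fact is that \(\mathbf{1}_F\) is continuous at every point outside \(\partial F\). If \(\vy\) lies in the interior of \(F\), then \(\vy_n\) is eventually in \(F\). If \(\vy\) lies in the interior of \(\sY \setminus F\), then \(\vy_n\) is eventually outside \(F\). In both cases the indicators converge. So the only exceptional \(\vu\) are those with \(\vg(\vx,\vu) \in \partial F\).

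The remaining step is to show that this exceptional set has \(\mathbb{P}_\rvu\)-measure zero. The boundary of a union is contained in the union of the boundaries, so \(\partial F \subseteq \partial\{f \geq c\} \cup \partial \sYfeas\). By continuity of \(f\), the set \(\{f > c\}\) is open and \(\{f \geq c\}\) is closed. Hence \(\partial\{f\geq c\} \subseteq \{f \geq c\} \setminus \{f > c\} = \{f = c\}\), which is the limit state surface. We read the hypothesis ``no mass on the limit state surface or boundary of \(\sYfeas\)'' as saying that, for each fixed \(\vx\),
\[
\mathbb{P}_\rvu\bigl(f(\vg(\vx,\rvu)) = c\bigr) = 0 \quad\text{and}\quad \mathbb{P}_\rvu\bigl(\vg(\vx,\rvu) \in \partial\sYfeas\bigr) = 0,
\]
that is, the pushforward of \(\mathbb{P}_\rvu\) under \(\vg(\vx,\cdot)\) puts no mass on these sets. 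Together these give \(\mathbb{P}_\rvu(\vg(\vx,\rvu) \in \partial F) = 0\), which completes the almost-everywhere convergence. Measurability of all these sets follows from the continuity of \(f\) and \(\vg\) and the fact that boundaries are closed.

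The main obstacle is this last step. It requires interpreting the ``no mass'' hypothesis correctly as a condition on the distribution of the perturbed design for each nominal \(\vx\), and relating \(\partial F\) to the level set \(\{f=c\}\) and to \(\partial \sYfeas\). Once that is done, the second claim is immediate. Sequential continuity on the metric space \(\sX\) gives continuity of \(P\), and a continuous real function on the compact set \(\sX\) attains its minimum by the extreme value theorem. Hence some \(\vx^* \in \sX\) minimizes \(P\).
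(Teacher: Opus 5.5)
Your proof is correct and follows essentially the same route as the paper. Both arguments apply dominated convergence to indicators that converge for every \(\vu\) outside the preimage of the limit state surface and \(\partial\sYfeas\), then finish with the extreme value theorem; the only difference is that you merge the two indicators into one failure set \(F\) with \(\partial F \subseteq \{f = c\} \cup \partial\sYfeas\), whereas the paper checks convergence of \(\mathbb{I}_{\{f\circ\vg \geq c\}}\) and \(\mathbb{I}_{\{\vg \in \sYfeas\}}\) separately.
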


In the experiments, we focus on the case where \(d_x = d_y = d_u = d\), \(\sX = \sYfeas \subset \R^d\) is a hyper-rectangle, \(\sY = \sU = \R^d\) and \(\vg(\vx, \vu) = \vx + \vu\), but the core concepts are applicable in the general case. In this specific case, \eqref{eq:fail-prob-general} becomes
\begin{equation}\label{eq:fail-prob-additive}
    P(\vx) = \mathbb{P}_\rvu\Bigl( f(\vx + \rvu) \geq c \;\text{ or }\; \vx + \rvu \notin \sYfeas \Bigr).
\end{equation}

\section{Related Work} \label{sec:related-work}
BO has been previously applied in reliability-based design optimization \citep{dubourg2011reliabilitybased,janusevskis2013rbdo,elamri2023rbdo,pelamatti2023rbdo} in which the authors seek to minimize an objective in expectation subject to constraints which must hold with some high minimum probability. \citet{tsai2023metamodelbased} solve a similar problem where the constraints must hold in expectation.
In contrast, we consider the problem where the constraints are promoted to the place of the objective, and seek a solution which satisfies a constraint with maximal probability.

Quantile and worst-case optimization are also a closely related problems to which BO has been successfully applied \citep{cakmak2020riskmeasures,nguyen2021valueatrisk,picheny2022quantile,urrehman2014worstcase,han2025worstcase}.
However, here the risk level is fixed and the nominal design \(\vx\) is chosen to give the best possible threshold \(c\), while this paper concerns the reverse problem.

Related problems arise in active learning, including the estimation of the failure probability \(\mathbb{P}(f(\rvu) \geq c)\) (called \emph{reliability analysis}), and the estimation of particular level sets of a black-box function \(\{\vu \in \sU : f(\vu) = c\}\), the limit state surface being such a set.
Works in this area include \citet{picheny2010adaptive}, \citet{bichon2008egra}, \citet{bect2011probFailure}, \citet{knudde2019feasibleRegionDiscovery} and \citet{booth2025failProbEstim}.

The first and second order reliability methods (FORM / SORM) \citep{hasofer1974form,breitung1984sorm} capitalize on the uneven contribution of different parts of the limit state surface to the failure probability by replacing it with a local linear or quadratic approximation about the point on the limit state surface with highest probability density \(p(\vu \,|\, f(\vu) = c )\).
While this is an acceptable approximation for many estimation problems, in the optimization case the optimal nominal design \(\vx^*\) will instead be trapped between \(d+1\) equally most-probable points -- the minimum number of vertices for a polyhedron in \(\R^d\).
While an algorithm which attempts to track these points as the proposed nominal design converges is conceivable, it is not the approach we follow in this paper.

As noted in the introduction, we are particularly interested in the case where the failure probability is very small, which demands techniques from rare-event estimation such as importance sampling \citep{kloek1978importancesampling} or subset simulation \citep{au2001subsetSimulation}.
Modern approaches to both these use Markov chain Monte Carlo~(MCMC) \citep{papaioannou2015MCMCforSubsetSim,papakonstantinou2023mcmcRareEvents,tong2023largeDeviationsAdaptiveIS} and density estimation \citep{deboer2005crossentropy,kroese2013crossentropy}.
\citet{tabandeh2022importanceReview} give a review of modern importance sampling methods.
Importance sampling has been applied to the problem of reliability-based design optimization by \citet{fonseca2006efficient} in a manner agnostic to the presence of a surrogate model.

Among works which seek to maximize reliability, \citet{wang2018yieldopt} use BO to decide where to run an adaptive Monte Carlo estimate of the reliability, while \citet{weller2022fast} propose a nested scheme using BO to train a Gaussian process which is accurate on the limit state surface then apply an evolutionary algorithm to select promising nominal designs.
They require 10,000s and 1,000s of simulations respectively and assume that the simulations are sufficiently fast that this is feasible.
To the best of our knowledge, two works exist which don't require so many simulations.
\citet{huang2010egoReliability} propose an algorithm which chooses between four different acquisition criteria to explore the limit state surface.
\citet{bichon2012efficient} propose three extensions to their earlier work on efficient global reliability analysis (EGRA) to tackle various problems related to reliability-based design optimization, one of which is a constrained version of~\eqref{eq:problem-statement}.
These are used as baseline algorithms for the numerical comparison in \cref{sec:experiments}.

\section{Background}
\paragraph{Bayesian Optimization (BO)} is a sample-efficient, global optimization algorithm for black-box functions \citep{frazier2018botutorial,garnett2023bayesopt}.
A prior belief, typically a Gaussian process~(GP), is placed on the objective and an acquisition strategy trades off between exploring regions with high uncertainty and exploiting regions which are already expected to give good values.
In the vanilla case, evaluations are made sequentially, where at each step the hyper-parameters of the probabilistic prior are fitted to the available data, then the acquisition strategy is used to give the next sample location, and finally the black-box function is evaluated at that location.
The whole process is begun with an initial, space-filling design such as a Sobol' sample, and is typically run until an evaluation budget is exhausted.

Thompson sampling~(TS) is one such acquisition strategy, where the next sample location is the maximum of a random sample from the posterior for the objective.
Conversely, knowledge gradient~(KG) \citep{frazier2009kg,scott2011ctskg} is the one-step look-ahead Bayes-optimal policy when the final recommendation is the maximum of the posterior mean of the objective.

\paragraph{Importance Sampling} is a variance reduction technique for Monte Carlo~(MC) estimates of rare event probabilities \citep{tabandeh2022importanceReview}.
For modest sample sizes, \(N_u\), the standard MC estimate \({P(\vx) \approx \frac{1}{N}\sum_{i=1}^{N_u} \mathbb{I}_{\Omega_i}}\) will likely observe no failures, where here we have written \({\Omega_i = \{f \circ \vg(\vx, \vu_i) \geq c \text{ or } \vg(\vx, \vu_i) \notin \sYfeas\}}\).
Indeed, the standard deviation of this estimator is \(\sqrt{P(\vx)(1-P(\vx)) / N_u}\).
If the true failure probability at the optimal \(\vx^*\) is \(P(\vx^*) = 10^{-7}\) then to get the standard deviation of the estimator below \(10\%\) of the failure probability, we need \(N_u \gtrsim 10^9\) samples.

Importance sampling reduces the variance by sampling from a distribution \(\mathbb{Q}_\rvu\) under which failures occur more frequently, then reweighting samples to account for their true probabilities under \(\mathbb{P}_\rvu\).
Assuming that \(\mathbb{P}_\rvu\) and \(\mathbb{Q}_\rvu\) have densities denoted \(p(\cdot)\) and \(q(\cdot)\) respectively, the importance sampling estimate is
\(P(\vx) \approx \frac{1}{N_u} \sum_{i=1}^{N_u} \frac{p(\vu_i)}{q(\vu_i)} \mathbb{I}_{\Omega_i}\).
A simple choice for \(\mathbb{Q}_\rvu\) is to use the same parametric family as \(\mathbb{P}_\rvu\) with increased variance.
For example, if \(\mathbb{P}_\rvu = \mathcal{N}(0, \Sigma_u)\) then we choose \(\mathbb{Q}_\rvu = \mathcal{N}(0, \tau^2 \Sigma_u)\).

\section{Recommended Design}
\label{sec:recommended-design}
We begin by assuming that we have made a number of observations of the black-box function \(f\) and wish to extract a recommended nominal design.
Problem~\eqref{eq:problem-statement} is one of partial or indirect information, where rather than observing the objective \(P(\vx)\) we instead observe \(f(\vy)\) at a finite set of \(\vy\).
Therefore, we have never collected enough information to fully know \(P(\vx)\) and must resort to a model.

Let us model the black-box objective \(f: \sY \to \R\) using a GP prior, \(f \sim \mathcal{GP}(\mu, k)\). To ensure the optimization problems are well defined, we assume that \(f\) is sample-continuous.

Given this prior belief, we use the posterior mean of the GP as a surrogate for \(f\).
Therefore, after \(n\) observations \(v_1 = f(\vy_1), \dots, v_n = f(\vy_n)\), the recommended nominal design is
\begin{equation} \label{eq:recommendation}
    \vx^*_n \in \argmin_{x \in \sX} P_n(\vx) \coloneqq \E[P(\vx) \,|\, \mathcal{F}_n]
\end{equation}
where \(\mathcal{F}_n\) is the sigma-algebra generated by the \(n\) observations and \(P_n(\vx)\) is the expected failure probability.
\Cref{thm:pred-fail-prob-cts} establishes that the arg-min in~\eqref{eq:recommendation} is non-empty.

The following lemma is proved in \cref{sec:theoretical-results}.
\begin{lemma} \label{thm:pred-fail-prob-cts}
    Suppose that \(f\) is a sample-continuous Gaussian process and that \(\vg: \sX \times \sU \to \sY\) is continuous. Then under mild regularity conditions on \(\mathbb{P}_\rvu\) made explicit in \cref{sec:theoretical-results}, \(P_n: \sX \to [0, 1]\) is continuous. If also \(\sX\) is compact then \(P_n\) attains its minimum value for some \(\vx^*_n \in \sX\).
\end{lemma}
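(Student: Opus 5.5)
The plan is to write \(P_n\) as a double expectation and pass continuity through both integrals by dominated convergence. By Fubini (the integrand is an indicator, so it is bounded and jointly measurable),
\begin{equation*}
    P_n(\vx) = \E\bigl[P(\vx) \,\big|\, \mathcal{F}_n\bigr] = \int_\sU \pi_n\bigl(\vg(\vx,\vu)\bigr)\, \mathrm{d}\mathbb{P}_\rvu(\vu),
\end{equation*}
where \(\pi_n(\vy) \coloneqq \mathbb{P}(f(\vy) \geq c \,|\, \mathcal{F}_n)\) for \(\vy \in \sYfeas\) and \(\pi_n(\vy) \coloneqq 1\) for \(\vy \notin \sYfeas\). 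Conditionally on \(\mathcal{F}_n\), \(f\) is again a sample-continuous GP with continuous posterior mean \(\mu_n\) and variance \(\sigma_n^2\). Hence \(\pi_n(\vy) = \Phi\bigl((\mu_n(\vy)-c)/\sigma_n(\vy)\bigr)\) on \(\sYfeas\) wherever \(\sigma_n(\vy) > 0\). Where \(\sigma_n(\vy) = 0\), it is the indicator \(\mathbb{I}\{\mu_n(\vy) \geq c\}\).

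Take \(\vx_k \to \vx\). By continuity of \(\vg\), \(\vg(\vx_k,\vu) \to \vg(\vx,\vu)\) for every \(\vu\). Since \(0 \le \pi_n \le 1\), dominated convergence gives \(P_n(\vx_k) \to P_n(\vx)\), provided \(\pi_n\) is continuous at \(\vg(\vx,\vu)\) for \(\mathbb{P}_\rvu\)-almost every \(\vu\). The work is therefore in identifying where \(\pi_n\) is continuous.

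Away from \(\partial\sYfeas\) and away from the set \(Z_n = \{\vy : \sigma_n(\vy) = 0\}\), \(\pi_n\) is a composition of continuous maps, so it is continuous there. The set \(Z_n\) is the main obstacle. It contains at least the observed points \(\vy_1,\dots,\vy_n\) (noise-free case), and there \(\pi_n\) can jump between \(\Phi(\cdot)\) values and \(\{0,1\}\). A cleaner way to handle this is to go back to the sample paths. Pointwise in \(\omega\), the indicator \(\mathbb{I}\{f(\vg(\vx_k,\vu)) \ge c\}\) converges to \(\mathbb{I}\{f(\vg(\vx,\vu)) \ge c\}\) whenever \(f(\vg(\vx,\vu)) \neq c\) and \(\vg(\vx,\vu) \notin \partial\sYfeas\). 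This follows from sample continuity of \(f\). So the regularity condition I would state is:
\begin{enumerate}[(i)]
    \item \(\mathbb{P}_\rvu(\vg(\vx,\rvu) \in \partial\sYfeas) = 0\) for every \(\vx\);
    \item for each \(\vx\), the pushforward of \(\mathbb{P}_\rvu\) under \(\vg(\vx,\cdot)\) assigns zero mass to the finite set \(\{\vy_1,\dots,\vy_n\}\), and more generally to \(Z_n\). For example, it suffices that \(\vg(\vx,\rvu)\) has a density for each \(\vx\).
\end{enumerate}
Then, for \(\vu\) outside this null set, \(f(\vg(\vx,\vu))\) is conditionally Gaussian with positive variance, so \(\mathbb{P}(f(\vg(\vx,\vu)) = c \,|\, \mathcal{F}_n) = 0\). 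Applying Fubini to the product measure, the indicators converge almost surely jointly in \((\omega,\vu)\). Dominated convergence with bound \(1\) on the product space then yields \(P_n(\vx_k) \to P_n(\vx)\), proving continuity; I expect this step to be the main technical point to phrase carefully.

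Finally, if \(\sX\) is compact, \(P_n\) is a continuous real function on a compact set, so it attains its minimum by the extreme value theorem, giving \(\vx_n^*\). This mirrors the argument for \cref{thm:true-fail-prob-cts}, with the deterministic no-mass-on-the-limit-state condition replaced by the automatic conditional Gaussianity of \(f\) off \(Z_n\).
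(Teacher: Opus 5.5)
Your argument is essentially the paper's. Positive posterior variance at \(\vg(\vx,\vu)\) for \(\mathbb{P}_\rvu\)-almost every \(\vu\) gives \(\mathbb{P}(f\circ\vg(\vx,\rvu)=c \,|\, \mathcal{F}_n)=0\), sample continuity then makes the indicators eventually constant almost surely, and dominated convergence plus the extreme value theorem finish; your (i) is part~(c) of \cref{ass:u-no-mass-on-bndry-gp}.

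One side remark is not right as stated: a density for \(\vg(\vx,\rvu)\) does not by itself give (ii). It is enough for strictly positive-definite kernels in the noise-free case, where \(Z_n=\{\vy_1,\dots,\vy_n\}\). But a degenerate kernel (e.g.\ linear or constant) can give \(Z_n\) positive Lebesgue measure. That is why the paper pairs absolute continuity (part~(b)) with the non-degeneracy condition in part~(a). Part~(a) is imposed for every possible conditioning set, so it also covers the fact that \(Z_n\) depends on the randomly, adaptively chosen locations.
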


Writing~\eqref{eq:fail-prob-general} as an expectation over a combination of indicators and appealing to Fubini's theorem to switch the order of integration, we obtain
\begin{align}
    &P_n(\vx) = \E[P(\vx) \,|\, \mathcal{F}_n] \nonumber \\
    &= \E\Bigl[ \E_\rvu \Bigl[ \mathbb{I}_{\{f \circ \vg(\vx, \rvu) \geq c\}} \mathbb{I}_{\{\vg(\vx, \rvu) \in \sYfeas\}} \nonumber \\
    &\qquad\qquad\qquad\qquad + \mathbb{I}_{\{\vg(\vx, \rvu) \notin \sYfeas\}}\Bigr] \,\Big|\, \mathcal{F}_n \Bigr] \nonumber \\ % \label{eq:fail-prob-via-indicators-v1} \\
    &= \E_\rvu\Bigl[ \Phi_n(\vx, \rvu; c) \, \mathbb{I}_{\{\vg(\vx, \rvu) \in \sYfeas\}}
    + \mathbb{I}_{\{\vg(\vx, \rvu) \notin \sYfeas\}} \Bigr] \label{eq:fail-prob-via-indicators}
\end{align}
where we write
\begin{equation} \label{eq:phi-n}
    \Phi_n(\vx, \vu; c) = \Phi\Biggl(\frac{\mu_n\bigl(\vg(\vx, \vu)\bigr) - c}{\sqrt{k_n\bigl(\vg(\vx, \vu), \vg(\vx, \vu)\bigr)}} \Biggr)
\end{equation}
and where \(\mu_n\) and \(k_n\) are the posterior mean and covariance functions of \(f\) conditional on \(\mathcal{F}_n\).

Finally, in the case of very small failure probabilities, we care more about the order of magnitude of the failure probability than the failure probability itself.
Thus, we define
\begin{equation} \label{eq:log-fail-prob}
    R_n(\vx) = - \log P_n(\vx).
\end{equation}
The negative here simply converts from a minimization problem to a maximization problem.
While \(R_n\) is maximized at the same place where \(P_n\) is minimized, taking the logarithm will be convenient since the default convergence tolerances for gradient-based optimizers will likely cause early termination for \(P_n\) but will remain applicable to \(R_n\).
Of course, \(R_n(\vx)\) is only well defined if \(P_n(\vx) > 0\). However, if \(P_n(\vx) = 0\) then \(P(\vx) = 0\) almost surely and so \(\vx\) is an optimal point with zero probability of failure and the optimization problem is solved. Therefore, we can safely ignore this case.

\section{Acquisition Strategies}
\label{sec:acquisition-strategies}
We now introduce the two acquisition strategies proposed in this paper.

\subsection{Thompson Sampling for Maximizing Reliability (TS-MR)}
\label{sec:thompson-sampling}
The first algorithm we propose is based on Thompson sampling.
At each stage of the optimization, we draw a random sample \(\tilde{f}\) from the GP posterior on \(f\) and use this to compute the nominal design with maximal reliability, \(\vx_{n+1}\). 
\begin{multline} \label{eq:thompson-nominal-acqf}
    \vx_{n+1} \in \\
    \argmin_{\vx \in \sX} \mathbb{P}_\rvu\Bigl( \tilde{f} \circ \vg(\vx, \rvu) \geq c \;\text{ or }\; \vg(\vx, \rvu) \notin \sYfeas \Bigr).
\end{multline}
We then choose a perturbation \(\vu_{n+1}\) which maximizes the product of the probability density of that perturbation and the variance of the failure indicator at that perturbation,
\begin{subequations} \label{eq:thompson-perturbation-acqf}
\begin{align}
    &\mkern-18mu \vu_{n+1} \in \argmax_{\vu \in \sUfeas{n+1}} \alpha^\text{MV}_n(\vu; \vx_{n+1}), \label{eq:thompson-perturbation-acqf-optim} \\
    &\mkern-18mu \alpha^\text{MV}_n(\vu; \vx_{n+1}) \nonumber \\
    &= p(\vu) \Var\bigl[ \mathbb{I}_{\{f \circ \vg(\vx_{n+1}, \vu) \geq c\}} \,\big|\, \mathcal{F}_n \bigr] \nonumber \\
    &= p(\vu) \Phi_n(\vx_{n+1}, \vu; c) \Bigl(1 - \Phi_n(\vx_{n+1}, \vu; c) \Bigr). \label{eq:thompson-perturbation-acqf-defn}
\end{align}
\end{subequations}
For the combination to be feasible, the optimization is constrained to lie in \(\sUfeas{n+1} = \vg_{\vx_{n+1}}^{-1}(\sYfeas) \subset \sU\) where we have written \(\vg_{\vx_{n+1}}(\cdot) = \vg(\vx_{n+1}, \cdot)\).
The idea of \(\alpha^\text{MV}_n\) is to encourage exploration of those parts of the limit state surface which are near to the proposed nominal design \(\vx_{n+1}\).
Finally, the next sample location is given by
\begin{equation} \label{eq:thompson-next-sample}
    \vy_{n+1} = \vg(\vx_{n+1}, \vu_{n+1}).
\end{equation}

\subsection{Knowledge Gradient for Maximizing Reliability (KG-MR)} \label{sec:knowledge-gradient-ideal}
As noted in \cref{sec:recommended-design}, the problem in~\eqref{eq:problem-statement} is one of partial or indirect information. KG has previously had much success being applied to such problems \citep{daulton2023hvkg,buathong2024functionnetworks,buckingham2025twostage,buckingham2025decoupledkg}.

KG applied to the value functions \((R_n)_{n \geq 1}\) defined in~\eqref{eq:log-fail-prob} is the change in expected value of the maximum of the negative log-failure probability before and after the next observation, conditional on that observation being made at the specified \(\vy \in \sYfeas\). That is, after \(n\) observations it is the function \(\alpha^\text{KG-MR}_n: \sYfeas \to \R\) given by
\begin{align}
    &\alpha^\text{KG-MR}_n(\vy) = \nonumber \\
    &\quad \E\biggl[ \max_{\vx \in \sX} R_{n+1}(\vx) \,\bigg|\, \mathcal{F}_n, \vy_{n+1} = \vy \biggr] - \max_{\vx \in \sX} R_n(\vx). \label{eq:kg-ideal}
\end{align}

It approximates the one-step look-ahead Bayes-optimal policy for minimizing the logarithm of the failure probability by replacing \(P(\vx)\) with \(P_n(\vx)\) in the expected utility \(\E[-\log P(\vx) \,|\, \mathcal{F}_n] \approx -\log P_n(\vx) = R_n(\vx)\).
This greatly improves computational efficiency and similar plug-in approximations have previously performed well in multi-objective BO \citep{daulton2023hvkg}.
\begin{remark}
    As noted earlier, if \(\max_{\vx \in \sX} R_{n+1}(\vx) = \infty\) then we have found a point with almost surely zero probability of failure and so need not worry that \(\alpha^\text{KG-MR}_n(\vy)\) is not well defined.
\end{remark}
\begin{remark}
    In the case where the optimal failure probability isn't as extreme, we could very well drop the logarithm and define it directly from \(P_n\).
\end{remark}

\begin{restatable}{lemma}{thmkgnonneg} \label{thm:kg-nonneg}
    The KG-MR acquisition function in~\eqref{eq:kg-ideal} is everywhere non-negative. That is,
    \begin{equation*}
        \forall n \geq 0 \; \forall \vy \in \sYfeas \quad \alpha^\text{KG-MR}_n(\vy) \geq 0.
    \end{equation*}
\end{restatable}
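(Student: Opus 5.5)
The approach is the standard KG non-negativity argument, with one extra step to handle the logarithm. Jensen's inequality cannot be applied to $R_{n+1}=-\log P_{n+1}$ directly. We therefore first use the tower property on $P_{n+1}$, and then use convexity of $-\log$.

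\textbf{Step 1: martingale identity.} Fix $\vy \in \sYfeas$ and $\vx \in \sX$. By definition $P_{n+1}(\vx) = \E[P(\vx)\,|\,\mathcal{F}_{n+1}]$, where $\mathcal{F}_{n+1}$ is generated by $\mathcal{F}_n$ together with the observation $f(\vy)$ at $\vy_{n+1}=\vy$. The tower property, valid since $P(\vx)\in[0,1]$ is integrable, gives
\begin{equation*}
\E\bigl[P_{n+1}(\vx)\,\big|\,\mathcal{F}_n, \vy_{n+1}=\vy\bigr] = P_n(\vx).
\end{equation*}
Because $\vy$ is chosen deterministically given $\mathcal{F}_n$, conditioning on it changes nothing.

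\textbf{Step 2: Jensen for the logarithm.} The function $t\mapsto -\log t$ is convex on $(0,1]$ (value $+\infty$ at $0$). Conditional Jensen then gives
\begin{equation*}
\E\bigl[R_{n+1}(\vx)\,\big|\,\mathcal{F}_n,\vy_{n+1}=\vy\bigr] \ge -\log \E\bigl[P_{n+1}(\vx)\,\big|\,\mathcal{F}_n,\vy_{n+1}=\vy\bigr] = R_n(\vx).
\end{equation*}
If $P_{n+1}(\vx)=0$ with positive probability, the left side is $+\infty$ and the inequality holds trivially. Since $-\log$ is bounded below by $0$ on $(0,1]$, the conditional expectation is always well defined in $[0,\infty]$.

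\textbf{Step 3: exchange max and expectation.} Pointwise, $\max_{\vx'} R_{n+1}(\vx') \ge R_{n+1}(\vx)$. By monotonicity of conditional expectation,
\begin{equation*}
\E\Bigl[\max_{\vx'} R_{n+1}(\vx')\,\Big|\,\mathcal{F}_n,\vy_{n+1}=\vy\Bigr] \ge R_n(\vx) \quad \text{for every } \vx.
\end{equation*}
We then take the supremum over $\vx$. The maxima exist by \cref{thm:pred-fail-prob-cts} when $\sX$ is compact; otherwise the argument works with suprema unchanged. This yields $\alpha^\text{KG-MR}_n(\vy)\ge 0$. When $\max_\vx R_n(\vx)=\infty$, the case is excluded by the earlier remark. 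For $n=0$, $\mathcal{F}_0$ is trivial and the same argument applies.

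\textbf{Main obstacle: measurability.} The delicate step is measurability of $\max_{\vx} R_{n+1}(\vx)$ as a random variable, which is needed for the conditional expectation to make sense. I would handle it with the continuity from \cref{thm:pred-fail-prob-cts}. Since $P_{n+1}$ is continuous in $\vx$, the supremum can be taken over a countable dense subset of $\sX$, so the maximum is a countable supremum of measurable functions. Each $P_{n+1}(\vx)$ is measurable because it is a measurable function of the observation, as shown by \eqref{eq:fail-prob-via-indicators}.
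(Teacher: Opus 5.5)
Your proposal is correct and follows the paper's proof. Both arguments bound $\max_{\vx} R_{n+1}(\vx)$ below by $R_{n+1}(\vx')$, take conditional expectations, apply Jensen to the convex map $z \mapsto -\log z$ together with the martingale identity $\E[P_{n+1}(\vx') \mid \mathcal{F}_n] = P_n(\vx')$ (which the paper uses implicitly and you state explicitly), and then maximize over $\vx'$; your treatment of measurability and the infinite-value cases adds care that the paper omits.
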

This is proved in \cref{sec:theoretical-results}.

\section{Approximations} \label{sec:approximations}
In order to optimize to find recommendations in~\eqref{eq:recommendation} or query points in~\eqref{eq:thompson-nominal-acqf}, \eqref{eq:thompson-perturbation-acqf} and \eqref{eq:kg-ideal}, we will need to approximate~\eqref{eq:fail-prob-via-indicators}, \eqref{eq:log-fail-prob} and the acquisition functions.
A naive approximation for~\eqref{eq:fail-prob-via-indicators} uses an MC approximation over \(N_u\) independent points \(\vu_1, \dots, \vu_{N_u} \sim \mathbb{P}_\rvu\),
\begin{multline} \label{eq:pn-naive-approx}
    P_n(\vx) \approx \frac{1}{N_u} \sum_{i = 1}^{N_u} \Phi_n(\vx, \vu_i; c) \; \mathbb{I}_{\{\vg(\vx, \vu_i) \in \sYfeas\}} \\
    + \mathbb{I}_{\{\vg(\vx, \vu_i) \notin \sYfeas\}}.
\end{multline}
However, there are two problems with optimizing this approximation directly. Firstly, while we were able to get rid of the indicator \(\mathbb{I}_{\{f \circ \vg(\vx, \rvu) \geq c\}}\) analytically thanks to the uncertainty in the GP, we could not remove all the indicators and the remaining ones mean the approximation in~\eqref{eq:pn-naive-approx} is not continuous in \(\vx\). Secondly, if the optimal failure probability is rare, then it is likely that all of the \(\vu_i\) will lie well below the failure threshold, meaning \(\Phi_n(\vx, \vu_i; c) \approx 0\) has almost zero gradient and we will have no gradient information to use to optimize \(P_n(\vx)\).

To address the first of these issues, we use an ad-hoc smoothing of the indicator function, described in \cref{sec:further-experimental-details--indicator-smoothing}.
Specifically, we approximate \(\mathbb{I}_{\{\vg(\vx, \vu) \in \sYfeas\}} \approx \iota(\vg(\vx, \vu); \delta)\), where \(\delta\) determines the width of the smoothed region around the boundary \(\partial \sYfeas\).
In practice, we find a very small amount of smoothing to be sufficient and set \(\delta=\min(0.05 \ell_\text{min}, 0.1)\) where \(\ell_\text{min}\) is the smallest side-length of \(\sYfeas\).

For the second issue, caused by near zero contributions to the failure probability, we use importance sampling with sampling distribution \(\mathbb{Q}_\rvu = \mathcal{N}(0, \tau^2 \Sigma_u)\).
In practice, we find a value of \(\tau = 3\) works well with optimal failure probabilities in the range \(10^{-6} - 10^{-8}\).

\subsection{Approximation for Recommendations} \label{sec:approximations-recommendation}
Introducing the indicator smoothing approximations and importance sampling into~\eqref{eq:recommendation} gives the following estimates for \(P_n(\vx)\), \(R_n(\vx)\) and the recommendation \(\vx^*_n\),
\begin{subequations}\label{eq:recommendations-approx}
\begin{align}
    \hat{\vx}^*_n &\in \argmax_{\vx \in \sX} \hat{R}_n(\vx) = \argmin_{\vx \in \sX} \hat{P}_n(\vx), \label{eq:recommendations-approx-argmin}\\
    \hat{R}_n(\vx) &= -\log{\hat{P}_n(\vx)}, \label{eq:recommendations-approx-rhat}\\
    \hat{P}_n(\vx) &= \frac{1}{N_u} \sum_{i=1}^{N_u} \frac{p(\vu_i)}{q(\vu_i)} \hat{J}_n(\vx, \vu_i), \label{eq:recommendations-approx-mcapprox} \\
    \hat{J}_n(\vx, \vu) &= \Phi_n(\vx, \vu; c) \iota\bigl(\vg(\vx, \vu); \delta \bigr) \nonumber \\
    &\qquad\qquad\qquad + \bigl(1 - \iota\bigl(\vg(\vx, \vu); \delta\bigr) \bigr). \label{eq:recommendations-approx-jhat}
\end{align}
\end{subequations}
We solve~\eqref{eq:recommendations-approx-argmin} via multi-start L-BFGS-B, a deterministic gradient-based optimizer (see \cref{sec:further-experimental-details--optimizing-recommendations}).

Further, it is known that quasi-Monte Carlo~(qMC) improves the convergence rate of MC estimates from \(\mathcal{O}(1/\sqrt{N_u})\) to \(\mathcal{O}((\log N_u)^{d_u}/N_u)\) for many integrands \citep{lemieux2009montecarlo}.
Where the distribution \(\mathbb{P}_\rvu\) is amenable, we use a qMC approximation for~\eqref{eq:recommendations-approx-mcapprox} using a Sobol' sample in place of independent samples. This is possible for Gaussian and uniformly distributed \(\rvu\). 

\subsection{Approximation for Thompson Sampling}
The first approximation required in TS-MR is to draw the sample \(\tilde{f}\) conditional on \(\mathcal{F}_n\). For this, we use the pathwise sampling method of \citet{wilson2020gpsampling,wilson2021pathwise} with \(N_\mathrm{rff} = 1024\) random Fourier features~(RFFs).
We use the following approximation, analogous to~\eqref{eq:recommendations-approx}, using \(\tilde{P}(\vx)\) to denote the failure probability in~\eqref{eq:thompson-nominal-acqf} obtained using \(\tilde{f}\),
\begin{subequations}\label{eq:thompson-nominal-acqf-approximation}
{\allowdisplaybreaks
\begin{align}
    \tilde{P}(\vx) &\approx \frac{1}{N_u}\sum_{i=1}^{N_u} \frac{p(\vu_i)}{q(\vu_i)} \tilde{J}(\vx, \vu_i) \\
    \tilde{J}(\vx, \vu) &\approx \Phi\Biggl(\frac{\tilde{f} \circ \vg(\vx, \vu) - c}{\rho}\Biggr) \iota\bigl(\vg(\vx, \vu); \delta\bigr) \nonumber \\
    &\qquad\qquad\qquad\quad + \bigl(1 - \iota\bigl(\vg(\vx, \vu); \delta\bigr)\bigr).
\end{align}
}
\end{subequations}

Here we have mimicked the smoothing of \(\mathbb{I}_{\{\tilde{f} \circ \vg(\vx, \vu) \geq c\}}\) in \eqref{eq:recommendations-approx-jhat} using a parameter \(\rho > 0\).
Again, we optimize this approximation of \(\tilde{P}(\vx)\) using multi-start L-BFGS-B, performed in log-space.
The second stage of TS-MR requires optimizing \(\alpha^\text{MV}_n\) in~\eqref{eq:thompson-perturbation-acqf}.
This requires neither importance sampling nor indicator smoothing.
However, we do perform it in log-space, using the \texttt{log\_ndtr()} function in PyTorch to compute \(\log \Phi_n(\vx_{n+1}, \vu; c)\) and \(\log{(1 - \Phi_n(\vx_{n+1}, \vu; c))}\) in a numerically stable manner.
The full algorithm is presented in \cref{alg:thompson-sampling}.

\begin{algorithm}[t]
\caption{Thompson Sampling for Maximal Reliability}
\label{alg:thompson-sampling}
\begin{algorithmic}[1]
    \REQUIRE Initial sample size \(n_0\), evaluation budget \(n_\mathrm{tot}\), qMC sample size \(N_u\), number of Fourier features \(N_\mathrm{rff}\), smoothing parameters \(\rho, \delta\)
    % \ENSURE Recommended nominal design \(\hat{\vx}^*_{n_\mathrm{tot}}\)
    \STATE Evaluate \(f\) at \(n_0\) points, chosen according to a scrambled Sobol' sequence on \(\sYfeas\)
    \FOR {\(n = n_0, \dots, n_\mathrm{tot} - 1\)}
        \STATE Fit MAP hyperparameters of GP prior on \(f\)
        \STATE Sample RFF features and weights for \(\tilde{f} \sim \mathcal{GP}(\mu_n, k_n)\)
        \STATE Generate qMC sample \((\vu_1, \dots, \vu_{N_u}) \sim \mathbb{Q}_\rvu\)
        \STATE Optimize \(\vx_{n+1} \gets \argmin_{\vx \in \sX} \log \tilde{P}(\vx)\) using the approximation in~\eqref{eq:thompson-nominal-acqf-approximation}
        \STATE Optimize \(\alpha^\text{MV}_n\) defined in~\eqref{eq:thompson-perturbation-acqf} to give \\ \(\vu_{n+1} \gets \argmax_{\vu \in \sUfeas{n+1}} \log \alpha^\text{MV}_n(\vu ; \vx_{n+1})\)
        \STATE Set \(\vy_{n+1} \gets \vg(\vx_{n+1}, \vu_{n+1})\)
        \STATE Evaluate expensive function, \(f(\vy_{n+1})\)
    \ENDFOR
    \STATE Compute recommendation \(\hat{\vx}^*_{n_\mathrm{tot}}\) using~\eqref{eq:recommendations-approx}
\end{algorithmic}
\end{algorithm}

\subsection{Approximations for Knowledge Gradient}
The `one-shot' approximation for KG \citep{balandat2020botorch} uses an MC or qMC estimate over \(N_v\) fantasy observations for the expectation over the next observation \(v_{n+1}\) in~\eqref{eq:kg-ideal}.
Variables from the maximization for each element of the MC sum are pulled out to the front to create a single, high-dimensional optimization problem,
\begin{multline} \label{eq:kg-oneshot}
    \max_{\vy \in \sYfeas} \hat{\alpha}_n(\vy) =
    \max_{\substack{\vy \in \sYfeas \\ \vx_1, \dots, \vx_{N_v} \in \sX}} \frac{1}{N_v} \sum_{i=1}^{N_v} \hat{R}_{n+1}(\vx_i; \vy, z_i)  \\
    - \max_{\vx \in \sX} \hat{R}_n(\vx).
\end{multline}
Here \(\hat{R}_n(\vx)\) is as in~\eqref{eq:recommendations-approx-rhat} and \(\hat{R}_{n+1}(\vx; \vy, z)\) is the same but assuming the \((n+1)\)-th observation of \(f\) is at \(\vy\) with value \(v = f(\vy) = \mu_n(\vy) + z\sqrt{k_n(\vy, \vy)}\). This use of the reparametrization trick separates the randomness and the dependence on \(\vy\). The sample \(z_1, \dots, z_{N_v}\) is a sample of \(N_v\) standard Gaussian variables. If they are independent then this is a standard MC estimate, however in practice we use a Sobol' sample passed through a Box-Muller transform to give a qMC estimate as is standard in BoTorch \citep{balandat2020botorch}.
The dimension of the optimization problem in~\eqref{eq:kg-oneshot} is \(d_y + N_v d_x\).

\begin{algorithm}[t]
\caption{Knowledge Gradient for Maximal Reliability}
\label{alg:knowledge-gradient}
\begin{algorithmic}[1]
    \REQUIRE Initial sample size \(n_0\), evaluation budget \(n_\mathrm{tot}\), qMC sample sizes \(N_u, N_v\); for discrete KG-MR the discretization size \(N_x\); for one-shot KG-MR the smoothing parameter \(\delta\)
    % \ENSURE Recommended nominal design \(\hat{\vx}^*_{n_\mathrm{tot}}\)
    \STATE Evaluate \(f\) at \(n_0\) points, chosen according to a scrambled Sobol' sequence on \(\sYfeas\)
    \FOR {\(n = n_0, \dots, n_\mathrm{tot} - 1\)}
        \STATE Fit MAP hyperparameters of GP prior on \(f\)
        \STATE Generate qMC samples \((\vu_1, \dots, \vu_{N_u}) \sim \mathbb{Q}_\rvu\) and \((z_1, \dots, z_{N_v}) \sim \mathcal{N}(0, 1)\)
        \STATE If using discrete KG-MR, sample \(\sXd\) of size \(N_x\) using a scrambled Sobol' sequence
        \STATE Optimize \(\vy_{n+1} \gets \argmax_{\vx \in \sYfeas} \hat{\alpha}_n(\vy)\) using either~\eqref{eq:kg-discrete} or~\eqref{eq:kg-oneshot}
        \STATE Evaluate expensive function, \(f(\vy_{n+1})\)
    \ENDFOR
    \STATE Compute recommendation \(\hat{\vx}^*_{n_\mathrm{tot}}\) using~\eqref{eq:recommendations-approx}
\end{algorithmic}
\end{algorithm}

An alternative which avoids the high-dimensional optimization in~\eqref{eq:kg-oneshot} is to replace the maximization over \(\vx \in \sX\) in~\eqref{eq:kg-ideal} by maximization over a finite set \(\sXd \subset \sX\) sampled from a scrambled Sobol' sequence or other space-filling design \citep[e.g.][]{buathong2024functionnetworks,buckingham2025twostage}.
Combining this with the (q)MC estimate for the expectation over \(v_{n+1}\) gives
\begin{equation} \label{eq:kg-discrete}
    \hat{\alpha}_n(\vy) = \frac{1}{N_v} \sum_{i=1}^{N_v} \max_{\vx \in \sXd} \hat{R}_{n+1}(\vx; \vy, z_i) - \max_{\vx \in \sXd} \hat{R}_n(\vx).
\end{equation}
This approximation is generally accurate when \(d_x\) is small, but suffers from the curse of dimensionality for larger \(d_x\).

For the discrete approximation in~\eqref{eq:kg-discrete}, the set \(\sXd\) is fixed so we do not need to worry about differentiability of \(\hat{R}_n\) and \(\hat{R}_{n+1}\) with respect to \(\vx\) and can set the bounds smoothing factor \(\delta=0\).
For the one-shot approximation in~\eqref{eq:kg-oneshot}, we keep \(\delta = \min(0.05 \ell_\text{min}, 0.1)\).

Both approximations are optimized using multi-start L-BFGS-B, paying particular care to the initial points used for the one-shot approximation (see \cref{sec:kg-optimization}).
The full algorithm is summarized in \cref{alg:knowledge-gradient}.

\section{Experiments} \label{sec:experiments}
In order to judge how well the TS-MR and two KG-MR approximations perform, they are compared to two algorithms which are already present in the literature: the algorithm of \citet{huang2010egoReliability} (denoted HC) and efficient global reliability analysis (EGRA) proposed by \citet{bichon2008egra,bichon2012efficient}.
These methods are outlined in \cref{sec:baseline-algos} and parameter choices are detailed in \cref{sec:further-experimental-details--algorithm-parameters}.
Additionally, a comparison with two simpler baseline algorithms is made.
The first is a random sampling approach which selects points \(\vy_1, \vy_2, \dots\) according to a scrambled Sobol' sequence, and the second uses expected improvement~(EI) to search for \(\min_\vy f(\vy)\) rather than minimizing the failure probability.
For all methods compared, the same procedure is used for generating recommendations. That is, we optimize~\eqref{eq:recommendations-approx-argmin} with multi-start L-BFGS-B as described in \cref{sec:further-experimental-details--optimizing-recommendations}.

We compare the algorithms on three random problems sampled from a GP and nine frequently-used synthetic test problems, ranging from two to sixteen input dimensions. The problems all have \(\sX = \sY = \sU\) and \(\vg(\vx, \vu) = \vx + \vu\) to represent a perturbation about a nominal design. The perturbations used are normally distributed, \(\rvu \sim \mathcal{N}(0, \Sigma_u)\), with diagonal covariance matrices \(\Sigma_u\) varying between problems.

Each algorithm is run 30 times on each test problem with a different random seed.
For each repeat, a different initial design is randomly selected using a scrambled Sobol' sample with size depending on the problem dimension and specified in \cref{sec:further-experimental-details--algorithm-parameters}.
For all algorithms, the GP surrogate has a Mat\'ern-\(5/2\) kernel with MAP-estimated hyperparameters fitted as described in \cref{sec:further-experimental-details--model-params}.

\subsection{Evaluation of the Recommendations}

Given a recommended nominal design \(\vx^*_n\), we evaluate it using a qMC approximation of \(P(\vx^*_n)\) with importance sampling, and using the true test function,
\begin{multline}
    P(\vx^*_n) \approx \frac{1}{N_u} \sum_{i=1}^{N_u} \frac{p(\vu_i)}{q(\vu_i)} \Bigl( \\
    \mathbb{I}_{\{f \circ \vg(\vx^*_n, \vu_i) \geq c\}} \mathbb{I}_{\{\vg(\vx^*_n, \vu_i) \in \sYfeas\}} + \mathbb{I}_{\{\vg(\vx^*_n, \vu_i) \notin \sYfeas\}}\Bigr).
\end{multline}
For this, we use \(N_u = 2^{20} \approx 1,000,000\), which is larger than the values we used for either the acquisition function or for generating the recommendation.
The same inflation method and scale factor for the importance sampling are used for this estimate as were used in the acquisition function and to generate recommendations.
Since in all cases \(\mathbb{Q}_\rvu\) is Gaussian, the qMC approximation uses a \(d\)-dimensional Sobol' sample with coordinates passed pairwise through a Box-Muller transform.

\subsection{Results}
\begin{figure*}[t]
    \centering
    \includegraphics[width=\linewidth]{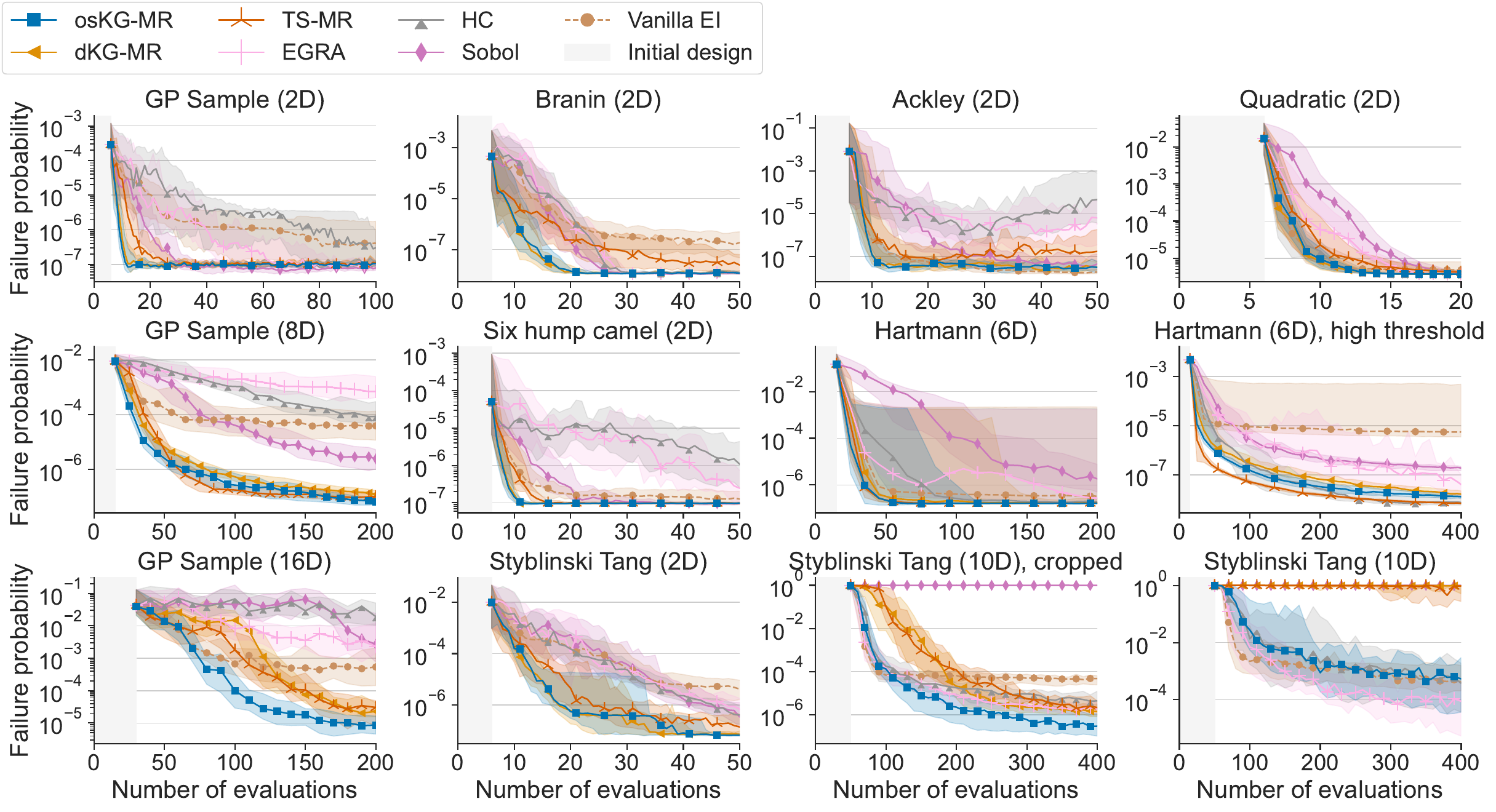}
    \caption[Results on the extreme reliability maximization test problems.]{Failure probabilities for the 12 test problems. The first column contains the GP test problems, while the remaining columns are problems with potential for model mismatch. The last column contains problems for which KG-MR is not expected to have an advantage. The failure probability associated with the recommended solution is shown as a function of number of evaluations of the expensive black-box function. The solid lines show the median failure probability over 30 repeats and the shaded regions show the upper and lower quartiles.}
    \label{fig:results-rare}
\end{figure*}

\Cref{fig:example-problem} shows an example of the samples made by the one-shot KG-MR and HC algorithms on the Styblinski-Tang~(2D) test problem. Both algorithms mainly collect samples near to the limit state surface. However, KG-MR focuses its samples on the parts of the limit state surface which contribute to the failure probability at the optimal design, while HC spreads its samples evenly around the whole limit state surface. This is particularly a problem when the limit state surface intersects the problem boundary (see \cref{sec:example-query-patterns}).

\Cref{fig:results-rare} shows the median failure probability and interquartile range for the recommended design at each point during the BO procedure.
The first column contains the three GP generated problems and the remainder contain the common synthetic problems with the potential for model-mismatch.
The problems in the middle two columns were chosen to have an interesting limit state surface, while the problems in the last column are chosen to exhibit cases where KG-MR and TS-MR are not expected to outperform HC and EGRA.

One-shot KG-MR (osKG-MR) performs best or joint best in 10 of the 12 problems, and discrete KG-MR (dKG-MR) matches its performance in lower dimensions.
In 10+ dimensions, the approximations in dKG-MR are too crude and slow the convergence.
TS-MR also performs strongly, though it lags in the 2D GP, 16D GP, Branin and cropped 10D Styblinski-Tang problems.
HC and EGRA generally perform poorly in the 2D problems because they fail to focus on the relevant parts of the limit state surface (e.g. \cref{fig:example-problem} and \cref{sec:example-query-patterns}).
In higher dimensions, they perform poorly on the GP problems but ok on the synthetic examples.

The problems in the last column of \cref{fig:results-rare} were chosen to exhibit cases where the KG-MR and TS-MR algorithms should not have an advantage.
The 2D quadratic function has a circular limit state surface, meaning all parts of the limit state surface are equally important.
Consequently, the HC algorithm matches the performance of KG-MR and TS-MR, while EGRA is not too far behind.
Similarly, for the 6D Hartmann problem, the higher threshold pushes the limit state surface further towards the edges of the domain and the HC algorithm becomes competitive, although in this case EGRA performs similarly to Sobol' search.
Finally, the HC and EGRA algorithms perform empirically well on the uncropped 10D Styblinski-Tang problem, with EGRA marginally beating HC and osKG-MR. This problem is highly multimodal, with 1024 local minima, of which 848 are feasible.
We hypothesize that HC and EGRA have good local search properties, focusing on learning about the limit state surface around one of the good local minima found in the initial design.
They are also computationally cheaper than osKG-MR (see \cref{sec:acquisition-timings}).
Thompson sampling is known to over-explore in higher dimensions \citep{papenmeier2025exploring}, and makes no progress on this problem, while dKG-MR makes no progress because the approximating set is not able to capture enough of the problem's structure.

Similar conclusions hold in the case of non-extreme failure probabilities, where the importance sampling is not needed and the logarithms in KG-MR are dropped (see \cref{sec:non-extreme-fail-probs}).
Notable differences include osKG-MR being the top or joint-top performer in all problems while TS-MR, HC and EGRA show improvements in some problems. Generally all algorithms converge faster.
A sensitivity study in \cref{sec:sensitivity-study} shows that the KG-MR methods are fairly robust to the choice of the parameters \(N_u\) and \(N_v\), but that the size of the discretization \(N_x = |\sXd|\) matters in higher dimensions for dKG-MR.

\section{Conclusions} \label{sec:conclusion}
This paper has introduced two distinct BO strategies targeting the equivalent problems of maximizing reliability, minimizing failure probability and maximizing yield.
Approximations incorporating quasi-Monte Carlo and importance sampling have been proposed and experiments have been conducted on a range of synthetic problems both with and without model mismatch.
On the 2D test problems, the KG-MR approaches perform equally well and converge faster than any of the alternatives.
On the higher-dimensional problems, the approximations in the discrete KG-MR acquisition function are too coarse and harm the convergence, but the one-shot KG-MR remains the top performer, sometimes jointly with TS-MR.
In all cases, the parameters of the KG-MR are very easy to tune, with the only choice to be made being how to split the computation budget between increasing \(N_u\) and \(N_v\) for osKG-MR in higher dimensions.
The KG-MR algorithms are generally more expensive than the alternatives, in the slowest case taking of the order of 30-40 seconds to optimize using a GPU (see \cref{sec:acquisition-timings}).
In summary, it should be recommended to use one-shot KG-MR unless the computation cost ceases to be negligible compared with the cost of evaluating \(f\), in which case TS-MR is a fast alternative.

The algorithm of \citet{huang2010egoReliability} is designed to tackle problems with multiple limit state functions \(f\), which we haven't addressed empirically in this paper. However, the framework extends very naturally to this case making it a good avenue for future research.
Similarly, constrained versions of the problem such as those tested on in \citet{bichon2012efficient} could be tackled.
Additionally, the framework allows for any combination of the nominal design \(\vx\) and perturbation \(\vu\), and this could be tested experimentally, for example with multiplicative perturbations as has been done in other related works \citep{cakmak2020riskmeasures,daulton2022robustmobo}.
Finally, adaptive importance sampling using kernel density estimation or MCMC could improve efficiency by ensuring all MC points for the estimation of \(\E_\rvu\) are near to the limit state surface.

% \section*{Software and Data}
% \paragraph{Software} Code to reproduce the experiments is available in the supplementary material, and at \url{XXX.github.com/YYY} (Redacted for anonymity during review).

% \paragraph{Data Access Statement}
% This work is entirely theoretical, there is no data underpinning this publication.

% Acknowledgements should only appear in the accepted version.
\section*{Acknowledgments}
The authors would like to thank Tobias Grafke for an insightful discussion on rare event modeling.
The first author was supported by the Engineering and Physical Sciences Research Council through the Mathematics of Systems II Centre for Doctoral Training at the University of Warwick (reference EP/S022244/1).
The second author was supported by the Flemish Government under the Flanders Artificial Intelligence Research program.
Computing facilities were provided by the Scientific Computing Research Technology Platform of the University of Warwick.

% \textbf{Do not} include acknowledgements in the initial version of the paper
% submitted for blind review.

% If a paper is accepted, the final camera-ready version can (and usually should)
% include acknowledgements.  Such acknowledgements should be placed at the end of
% the section, in an unnumbered section that does not count towards the paper
% page limit. Typically, this will include thanks to reviewers who gave useful
% comments, to colleagues who contributed to the ideas, and to funding agencies
% and corporate sponsors that provided financial support.
\section*{Data Access Statement}
This work is entirely theoretical, there is no data underpinning this publication.

\section*{Impact Statement}

This paper presents work whose goal is to advance the field of Machine
Learning. There are many potential societal consequences of our work, none
which we feel must be specifically highlighted here.

\DeclareRobustCommand{\PRE}[3]{#3}
\bibliography{references}

@article{huang2010egoReliability,
  title = {A Modified Efficient Global Optimization Algorithm for Maximal Reliability in a Probabilistic Constrained Space},
  volume = {132},
  issn = {1050-0472},
  url = {https://doi.org/10.1115/1.4001532},
  number = {61002},
  journal = {Journal of Mechanical Design},
  author = {Huang, Yen-Chih and Chan, Kuei-Yuan},
  date = {2010-05-20},
  year = {2010},
}

@article{wang2018yieldopt,
  title = {Efficient Yield Optimization for Analog and {{SRAM}} Circuits via {{Gaussian}} Process Regression and Adaptive Yield Estimation},
  author = {Wang, Mengshuo and Lv, Wenlong and Yang, Fan and Yan, Changhao and Cai, Wei and Zhou, Dian and Zeng, Xuan},
  date = {2018-10},
  year = {2018},
  journal = {IEEE Transactions on Computer-Aided Design of Integrated Circuits and Systems},
  volume = {37},
  number = {10},
  pages = {1929--1942},
  issn = {1937-4151},
  url = {https://doi.org/10.1109/TCAD.2017.2778061},
}

@article{weller2022fast,
  title = {Fast and Efficient High-Sigma Yield Analysis and Optimization Using Kernel Density Estimation on a {{Bayesian}} Optimized Failure Rate Model},
  author = {Weller, Dennis D. and Hefenbrock, Michael and Beigl, Michael and Tahoori, Mehdi B.},
  date = {2022-03},
  year = {2022},
  journal = {IEEE Transactions on Computer-Aided Design of Integrated Circuits and Systems},
  volume = {41},
  number = {3},
  pages = {695--708},
  issn = {1937-4151},
  url = {https://doi.org/10.1109/TCAD.2021.3064440},
}

@article{bichon2012efficient,
  title = {Efficient Global Surrogate Modeling for Reliability-Based Design Optimization},
  author = {Bichon, Barron J. and Eldred, Michael S. and Mahadevan, Sankaran and McFarland, John M.},
  date = {2012},
  year = {2012},
  journal = {Journal of Mechanical Design},
  shortjournal = {J. Mech. Des},
  volume = {135},
  number = {011009},
  issn = {1050-0472},
  url = {https://doi.org/10.1115/1.4022999},
}

@article{bichon2008egra,
  title = {Efficient Global Reliability Analysis for Nonlinear Implicit Performance Functions},
  author = {Bichon, B. J. and Eldred, M. S. and Swiler, L. P. and Mahadevan, S. and McFarland, J. M.},
  date = {2008},
  year = {2008},
  journal = {AIAA Journal},
  volume = {46},
  number = {10},
  pages = {2459--2468},
  publisher = {{American Institute of Aeronautics and Astronautics}},
  issn = {0001-1452},
  url = {https://doi.org/10.2514/1.34321},
}

@article{picheny2010adaptive,
  title = {Adaptive Designs of Experiments for Accurate Approximation of a Target Region},
  author = {Picheny, Victor and Ginsbourger, David and Roustant, Olivier and Haftka, Raphael T. and Kim, Nam-Ho},
  date = {2010-06},
  year = {2010},
  journal = {Journal of Mechanical Design},
  shortjournal = {J. Mech. Des},
  volume = {132},
  number = {071008},
  issn = {1050-0472},
  url = {https://doi.org/10.1115/1.4001873},
}

@article{bect2011probFailure,
  title = {Sequential Design of Computer Experiments for the Estimation of a Probability of Failure},
  author = {Bect, Julien and Ginsbourger, David and Li, Ling and Picheny, Victor and Vazquez, Emmanuel},
  date = {2011-04-21},
  year = {2011},
  journal = {Statistics and Computing},
  shortjournal = {Stat Comput},
  volume = {22},
  number = {3},
  pages = {773--793},
  issn = {1573-1375},
  url = {https://doi.org/10.1007/s11222-011-9241-4},
}

@inproceedings{knudde2019feasibleRegionDiscovery,
  title = {Active Learning for Feasible Region Discovery},
  booktitle = {2019 18th {{IEEE International Conference On Machine Learning And Applications}} ({{ICMLA}})},
  author = {Knudde, Nicolas and Couckuyt, Ivo and Shintani, Kohei and Dhaene, Tom},
  date = {2019-12},
  year = {2019},
  pages = {567--572},
  url = {https://doi.org/10.1109/ICMLA.2019.00106},
  eventtitle = {2019 18th {{IEEE International Conference On Machine Learning And Applications}} ({{ICMLA}})},
}

@article{booth2025failProbEstim,
  title = {Two-Stage Design for Failure Probability Estimation with {{Gaussian}} Process Surrogates},
  author = {Booth, Annie S. and Renganathan, S. Ashwin},
  year = 2025,
  month = oct,
  journal = {Journal of Quality Technology},
  volume = {57},
  number = {5},
  eprint = {https://doi.org/10.1080/00224065.2025.2562868},
  pages = {500--516},
  publisher = {Taylor \& Francis},
  url = {https://doi.org/10.1080/00224065.2025.2562868}
}

@article{hasofer1974form,
  title = {Exact and Invariant Second-Moment Code Format},
  author = {Hasofer, Abraham M. and Lind, Niels C.},
  date = {1974-02-01},
  year = {1974},
  journal = {Journal of the Engineering Mechanics Division},
  volume = {100},
  number = {1},
  pages = {111--121},
  publisher = {American Society of Civil Engineers},
  url = {https://doi.org/10.1061/JMCEA3.0001848},
}

@article{breitung1984sorm,
  title = {Asymptotic Approximations for Multinormal Integrals},
  author = {Breitung, Karl},
  date = {1984-03-01},
  year = {1984},
  journal = {Journal of Engineering Mechanics},
  volume = {110},
  number = {3},
  pages = {357--366},
  publisher = {American Society of Civil Engineers},
  issn = {0733-9399},
  url = {https://doi.org/10.1061/(ASCE)0733-9399(1984)110:3(357)},
}

@article{kloek1978importancesampling,
  title = {Bayesian Estimates of Equation System Parameters: An Application of Integration by {{Monte Carlo}}},
  shorttitle = {Bayesian Estimates of Equation System Parameters},
  author = {Kloek, T. and {\PRE{Dijk}{Van}{van}} Dijk, H. K.},
  date = {1978},
  year = {1978},
  journal = {Econometrica},
  volume = {46},
  number = {1},
  eprint = {1913641},
  eprinttype = {jstor},
  pages = {1--19},
  publisher = {[Wiley, Econometric Society]},
  issn = {0012-9682},
  url = {https://doi.org/10.2307/1913641},
}

@article{au2001subsetSimulation,
  title = {Estimation of Small Failure Probabilities in High Dimensions by Subset Simulation},
  author = {Au, Siu-Kui and Beck, James L.},
  date = {2001-10-01},
  year = {2001},
  journal = {Probabilistic Engineering Mechanics},
  volume = {16},
  number = {4},
  pages = {263--277},
  issn = {0266-8920},
  url = {https://doi.org/10.1016/S0266-8920(01)00019-4},
}

@article{deboer2005crossentropy,
  title = {A Tutorial on the Cross-Entropy Method},
  author = {{\PRE{Boer}{De}{de}} Boer, Pieter-Tjerk and Kroese, Dirk P. and Mannor, Shie and Rubinstein, Reuven Y.},
  date = {2005-02-01},
  year = {2005},
  journal = {Annals of Operations Research},
  shortjournal = {Ann Oper Res},
  volume = {134},
  number = {1},
  pages = {19--67},
  issn = {1572-9338},
  url = {https://doi.org/10.1007/s10479-005-5724-z},
}

@incollection{kroese2013crossentropy,
  title = {Chapter 2 - The Cross-Entropy Method for Estimation},
  booktitle = {Handbook of {{Statistics}}},
  author = {Kroese, Dirk P. and Rubinstein, Reuven Y. and Glynn, Peter W.},
  editor = {Rao, C. R. and Govindaraju, Venu},
  date = {2013-01-01},
  year = {2013},
  series = {Handbook of {{Statistics}}},
  volume = {31},
  pages = {19--34},
  publisher = {Elsevier},
  url = {https://doi.org/10.1016/B978-0-444-53859-8.00002-3},
}

@article{papaioannou2015MCMCforSubsetSim,
  title = {{{MCMC}} Algorithms for Subset Simulation},
  author = {Papaioannou, Iason and Betz, Wolfgang and Zwirglmaier, Kilian and Straub, Daniel},
  date = {2015-07-01},
  year = {2015},
  journal = {Probabilistic Engineering Mechanics},
  volume = {41},
  pages = {89--103},
  issn = {0266-8920},
  url = {https://doi.org/10.1016/j.probengmech.2015.06.006},
}

@article{papakonstantinou2023mcmcRareEvents,
  title = {Hamiltonian {{MCMC}} Methods for Estimating Rare Events Probabilities in High-Dimensional Problems},
  author = {Papakonstantinou, Konstantinos G. and Nikbakht, Hamed and Eshra, Elsayed},
  date = {2023-10-01},
  year = {2023},
  journal = {Probabilistic Engineering Mechanics},
  volume = {74},
  pages = {103485},
  issn = {0266-8920},
  url = {https://doi.org/10.1016/j.probengmech.2023.103485},
}

@article{tong2023largeDeviationsAdaptiveIS,
  title = {Large Deviation Theory-based Adaptive Importance Sampling for Rare Events in High Dimensions},
  author = {Tong, Shanyin and Stadler, Georg},
  date = {2023-09-30},
  year = {2023},
  journal = {SIAM/ASA Journal on Uncertainty Quantification},
  shortjournal = {SIAM/ASA J. Uncertainty Quantification},
  volume = {11},
  number = {3},
  pages = {788--813},
  publisher = {{Society for Industrial and Applied Mathematics}},
  url = {https://doi.org/10.1137/22M1524758},
}

@article{tabandeh2022importanceReview,
  title = {A Review and Assessment of Importance Sampling Methods for Reliability Analysis},
  author = {Tabandeh, Armin and Jia, Gaofeng and Gardoni, Paolo},
  date = {2022-07-01},
  year = {2022},
  journal = {Structural Safety},
  shortjournal = {Structural Safety},
  volume = {97},
  pages = {102216},
  issn = {0167-4730},
  url = {https://doi.org/10.1016/j.strusafe.2022.102216},
}

@article{janusevskis2013rbdo,
  title = {Simultaneous Kriging-Based Estimation and Optimization of Mean Response},
  author = {Janusevskis, Janis and Le Riche, Rodolphe},
  date = {2013-02-01},
  year = {2013},
  journal = {Journal of Global Optimization},
  shortjournal = {J Glob Optim},
  volume = {55},
  number = {2},
  pages = {313--336},
  issn = {1573-2916},
  url = {https://doi.org/10.1007/s10898-011-9836-5},
}

@article{elamri2023rbdo,
  title = {A sampling criterion for constrained Bayesian optimization with uncertainties},
  author = {El Amri, Reda and Le Riche, Rodolphe and Helbert, Céline and Blanchet-Scalliet, Christophette and Da Veiga, Sébastien},
  date = {2023},
  year = {2023},
  journal = {The SMAI Journal of computational mathematics},
  volume = {9},
  pages = {285--309},
  issn = {2426-8399},
  url = {https://doi.org/10.5802/smai-jcm.102},
}

@article{pelamatti2023rbdo,
  title = {Coupling and Selecting Constraints in {{Bayesian}} Optimization under Uncertainties},
  author = {Pelamatti, Julien and Le Riche, Rodolphe and Helbert, Céline and Blanchet-Scalliet, Christophette},
  date = {2023-07-08},
  year = {2023},
  journal = {Optimization and Engineering},
  shortjournal = {Optim Eng},
  volume = {25},
  number = {1},
  pages = {373--412},
  issn = {1573-2924},
  url = {https://doi.org/10.1007/s11081-023-09807-x},
}

@article{fonseca2006efficient,
  title = {Efficient Robust Design via {{Monte Carlo}} Sample Reweighting},
  author = {Fonseca, José R. and Friswell, Michael I. and Lees, Arthur W.},
  date = {2006-08-09},
  year = {2006},
  journal = {International Journal for Numerical Methods in Engineering},
  volume = {69},
  number = {11},
  pages = {2279--2301},
  issn = {1097-0207},
  url = {https://doi.org/10.1002/nme.1850},
}

@article{dubourg2011reliabilitybased,
  title = {Reliability-Based Design Optimization Using Kriging Surrogates and Subset Simulation},
  author = {Dubourg, Vincent and Sudret, Bruno and Bourinet, Jean-Marc},
  date = {2011-11-01},
  year = {2011},
  journal = {Structural and Multidisciplinary Optimization},
  shortjournal = {Struct Multidisc Optim},
  volume = {44},
  number = {5},
  pages = {673--690},
  issn = {1615-1488},
  url = {https://doi.org/10.1007/s00158-011-0653-8},
}

@article{tsai2023metamodelbased,
  title = {Metamodel-Based Simulation Optimization Considering a Single Stochastic Constraint},
  author = {Tsai, Shing Chih and Park, Chuljin and Chang, Min Han},
  date = {2023-07-01},
  year = {2023},
  journal = {Computers \& Operations Research},
  shortjournal = {Computers \& Operations Research},
  volume = {155},
  pages = {106239},
  issn = {0305-0548},
  url = {https://doi.org/10.1016/j.cor.2023.106239},
}

@inproceedings{cakmak2020riskmeasures,
 author = {Cakmak, Sait and Astudillo Marban, Raul and Frazier, Peter I. and Zhou, Enlu},
 booktitle = {Advances in Neural Information Processing Systems},
 pages = {20130--20141},
 publisher = {Curran Associates, Inc.},
 title = {{B}ayesian Optimization of Risk Measures},
 url = {https://proceedings.neurips.cc/paper_files/paper/2020/file/e8f2779682fd11fa2067beffc27a9192-Paper.pdf},
 volume = {33},
 year = {2020}
}

@inproceedings{daulton2023hvkg,
  title = 	 {Hypervolume Knowledge Gradient: A Lookahead Approach for Multi-Objective {B}ayesian Optimization with Partial Information},
  author =       {Daulton, Sam and Balandat, Maximilian and Bakshy, Eytan},
  booktitle = 	 {Proceedings of the 40th International Conference on Machine Learning},
  pages = 	 {7167--7204},
  year = 	 {2023},
  volume = 	 {202},
  series = 	 {Proceedings of Machine Learning Research},
  month = 	 jul,
  publisher =    {PMLR},
  url = 	 {https://proceedings.mlr.press/v202/daulton23a.html},
}

@InProceedings{buathong2024functionnetworks,
  title = {{B}ayesian Optimization of Function Networks with Partial Evaluations},
  author = {Buathong, Poompol and Wan, Jiayue and Astudillo, Raul and Daulton, Sam and Balandat, Maximilian and Frazier, Peter I.},
  booktitle = {International Conference on Machine Learning},
  pages = {4752--4784},
  year = {2024},
  editor = {Salakhutdinov, Ruslan and Kolter, Zico and Heller, Katherine and Weller, Adrian and Oliver, Nuria and Scarlett, Jonathan and Berkenkamp, Felix},
  volume = {235},
  series = {Proceedings of Machine Learning Research},
  month = jul,
  publisher = {PMLR},
  url = {https://proceedings.mlr.press/v235/buathong24a.html},
}

@inproceedings{buckingham2025decoupledkg,
  author={Buckingham, Jack M. and {Rojas Gonzalez}, Sebastian and Branke, Juergen},
  editor={Singh, Hemant and Ray, Tapabrata and Knowles, Joshua and Li, Xiaodong and Branke, Juergen and Wang, Bing and Oyama, Akira},
  title={Knowledge Gradient for Multi-objective {B}ayesian Optimization with Decoupled Evaluations},
  booktitle={Evolutionary Multi-Criterion Optimization},
  series={Lecture Notes in Computer Science},
  volume={15513},
  year={2025},
  month=feb,
  publisher={Springer Nature Singapore},
  address={Singapore},
  pages={117--132},
  isbn={978-981-96-3538-2},
  url={https://doi.org/10.1007/978-981-96-3538-2_9},
}

@article{buckingham2025twostage,
    title={Bayesian Optimization for Non-Convex Two-Stage Stochastic Optimization Problems}, 
    author={Jack M. Buckingham and Ivo Couckuyt and Juergen Branke},
    journal = {arXiv preprint},
    year={2025},
    month=feb,
    url = {https://arxiv.org/abs/2408.17387},
}

@incollection{frazier2018botutorial,
   author = {Peter I. Frazier},
   title = {{B}ayesian Optimization},
   booktitle = {Recent Advances in Optimization and Modeling of Contemporary Problems},
   pages = {255-278},
   publisher = {INFORMS},
   month = oct,
   year = {2018},
   url = {https://doi.org/10.1287/educ.2018.0188},
}

@book{garnett2023bayesopt,
  title={{B}ayesian optimization},
  author={Garnett, Roman},
  year={2023},
  publisher={Cambridge University Press},
  isbn={9781108425780},
  url={https://bayesoptbook.com/}
}

@article{frazier2009kg,
   author = {Frazier, Peter I. and Powell, Warren B. and Dayanik, Savas},
   url = {https://doi.org/10.1287/ijoc.1080.0314},
   issn = {1091-9856},
   issue = {4},
   journal = {INFORMS Journal on Computing},
   month = nov,
   pages = {599-613},
   title = {The Knowledge-Gradient Policy for Correlated Normal Beliefs},
   volume = {21},
   year = {2009},
}

@article{scott2011ctskg,
  author = {Scott, Warren and Frazier, Peter I. and Powell, Warren B.},
  title = {The Correlated Knowledge Gradient for Simulation Optimization of Continuous Parameters using {G}aussian Process Regression},
  journal = {SIAM Journal on Optimization},
  volume = {21},
  number = {3},
  pages = {996-1026},
  year = {2011},
  url = {https://doi.org/10.1137/100801275},
}

@article{charnes1963chanceconstraints,
  title = {Deterministic Equivalents for Optimizing and Satisficing under Chance Constraints},
  author = {Charnes, A. and Cooper, W. W.},
  date = {1963-02},
  year = {1963},
  journal = {Operations Research},
  volume = {11},
  number = {1},
  pages = {18--39},
  publisher = {INFORMS},
  issn = {0030-364X},
  url = {https://doi.org/10.1287/opre.11.1.18},
}

@article{jones1993lipschitzian,
  title = {Lipschitzian Optimization without the {{Lipschitz}} Constant},
  author = {Jones, D. R. and Perttunen, C. D. and Stuckman, B. E.},
  date = {1993-10-01},
  year = {1993},
  journal = {Journal of Optimization Theory and Applications},
  shortjournal = {J Optim Theory Appl},
  volume = {79},
  number = {1},
  pages = {157--181},
  issn = {1573-2878},
  url = {https://doi.org/10.1007/BF00941892},
}

@article{gablonsky2001locallybiased,
  title = {A Locally-Biased Form of the {{DIRECT}} Algorithm},
  author = {Gablonsky, J.M. and Kelley, C.T.},
  date = {2001-09-01},
  year = {2001},
  journal = {Journal of Global Optimization},
  shortjournal = {Journal of Global Optimization},
  volume = {21},
  number = {1},
  pages = {27--37},
  issn = {1573-2916},
  url = {https://doi.org/10.1023/A:1017930332101}
}

@inproceedings{wilson2020gpsampling,
  title = {Efficiently Sampling Functions from {{Gaussian}} Process Posteriors},
  booktitle = {Proceedings of the 37th {{International Conference}} on {{Machine Learning}}},
  author = {Wilson, James and Borovitskiy, Viacheslav and Terenin, Alexander and Mostowsky, Peter and Deisenroth, Marc},
  date = {2020-11-21},
  year = {2020},
  pages = {10292--10302},
  publisher = {PMLR},
  issn = {2640-3498},
  url = {https://proceedings.mlr.press/v119/wilson20a.html},
  eventtitle = {International {{Conference}} on {{Machine Learning}}}
}

@article{wilson2021pathwise,
  title = {Pathwise Conditioning of {{Gaussian}} Processes},
  author = {Wilson, James and Borovitskiy, Viacheslav and Terenin, Alexander and Mostowsky, Peter and Deisenroth, Marc},
  date = {2021},
  year = {2021},
  journal = {Journal of Machine Learning Research},
  volume = {22},
  number = {105},
  pages = {1--47},
  issn = {1533-7928},
  url = {http://jmlr.org/papers/v22/20-1260.html},
}

@book{lemieux2009montecarlo,
  title = {Monte {{Carlo}} and Quasi-{{Monte Carlo}} Sampling},
  author = {Lemieux},
  date = {2009},
  year = {2009},
  series = {Springer {{Series}} in {{Statistics}}},
  edition = {1},
  publisher = {Springer},
  location = {New York, NY},
  url = {https://doi.org/10.1007/978-0-387-78165-5},
  isbn = {978-0-387-78164-8 978-0-387-78165-5}
}

@article{jones1998ego,
   author = {Donald R. Jones and Matthias Schonlau and William J. Welch},
   url = {https://doi.org/10.1023/A:1008306431147},
   issn = {09255001},
   issue = {4},
   journal = {Journal of Global Optimization},
   month = dec,
   pages = {455-492},
   title = {Efficient Global Optimization of Expensive Black-Box Functions},
   volume = {13},
   year = {1998},
}

@InProceedings{daulton2022robustmobo,
  title = {Robust Multi-Objective {B}ayesian Optimization Under Input Noise},
  author = {Daulton, Samuel and Cakmak, Sait and Balandat, Maximilian and Osborne, Michael A. and Zhou, Enlu and Bakshy, Eytan},
  booktitle = {International Conference on Machine Learning},
  pages = {4831--4866},
  year = {2022},
  editor = {Chaudhuri, Kamalika and Jegelka, Stefanie and Song, Le and Szepesvari, Csaba and Niu, Gang and Sabato, Sivan},
  volume = {162},
  series = {Proceedings of Machine Learning Research},
  month = jul,
  publisher = {PMLR},
  url = {https://proceedings.mlr.press/v162/daulton22a.html}
}

@inproceedings{balandat2020botorch,
    author = {Balandat, Maximilian and Karrer, Brian and Jiang, Daniel and Daulton, Samuel and Letham, Ben and Wilson, Andrew G and Bakshy, Eytan},
    booktitle = {Advances in Neural Information Processing Systems},
    editor = {H. Larochelle and M. Ranzato and R. Hadsell and M.F. Balcan and H. Lin},
    pages = {21524--21538},
    publisher = {Curran Associates, Inc.},
    title = {BoTorch: A Framework for Efficient Monte-Carlo {Bayesian} Optimization},
    url = {https://proceedings.neurips.cc/paper/2020/file/f5b1b89d98b7286673128a5fb112cb9a-Paper.pdf},
    volume = {33},
    year = {2020}
}

@inproceedings{papenmeier2025exploring,
  title = {Exploring Exploration in {{Bayesian}} Optimization},
  booktitle = {Proceedings of the {{Forty-first Conference}} on {{Uncertainty}} in {{Artificial Intelligence}}},
  author = {Papenmeier, Leonard and Cheng, Nuojin and Becker, Stephen and Nardi, Luigi},
  year = 2025,
  month = jul,
  pages = {3388--3415},
  publisher = {PMLR},
  issn = {2640-3498},
  url={https://proceedings.mlr.press/v286/papenmeier25a.html},
}

@article{qing2023robust,
  title = {A Robust Multi-Objective {{Bayesian}} Optimization Framework Considering Input Uncertainty},
  author = {Qing, Jixiang and Couckuyt, Ivo and Dhaene, Tom},
  year = 2023,
  month = jul,
  journal = {Journal of Global Optimization},
  volume = {86},
  number = {3},
  pages = {693--711},
  issn = {1573-2916},
  url = {https://doi.org/10.1007/s10898-022-01262-9},
}

@inproceedings{nguyen2021valueatrisk,
  title = {Value-at-Risk Optimization with {{Gaussian}} Processes},
  booktitle = {Proceedings of the 38th {{International Conference}} on {{Machine Learning}}},
  author = {Nguyen, Quoc Phong and Dai, Zhongxiang and Low, Bryan Kian Hsiang and Jaillet, Patrick},
  year = 2021,
  month = jul,
  pages = {8063--8072},
  publisher = {PMLR},
  issn = {2640-3498},
  url = {https://proceedings.mlr.press/v139/nguyen21b.html},
}

@inproceedings{picheny2022quantile,
  title = {Bayesian Quantile and Expectile Optimisation},
  booktitle = {Proceedings of the {{Thirty-Eighth Conference}} on {{Uncertainty}} in {{Artificial Intelligence}}},
  author = {Picheny, Victor and Moss, Henry and Torossian, L{\'e}onard and Durrande, Nicolas},
  year = 2022,
  month = aug,
  pages = {1623--1633},
  publisher = {PMLR},
  issn = {2640-3498},
  url = {https://proceedings.mlr.press/v180/picheny22a.html},
}

@article{urrehman2014worstcase,
  title = {Efficient {{Kriging-based}} Robust Optimization of Unconstrained Problems},
  author = {{ur Rehman}, Samee and Langelaar, Matthijs and {van Keulen}, Fred},
  year = 2014,
  month = nov,
  journal = {Journal of Computational Science},
  volume = {5},
  number = {6},
  pages = {872--881},
  issn = {1877-7503},
  url = {https://doi.org/10.1016/j.jocs.2014.04.005},
}

@article{han2025worstcase,
  title = {Worst-Case Robust Optimization Based on an Adaptive Incremental {{Kriging}} Metamodel},
  author = {Han, Jie and Zheng, Yuxuan and Wang, Kai and Yang, Chunhua and Yuan, Xin},
  year = 2025,
  month = jan,
  journal = {Expert Systems with Applications},
  volume = {260},
  pages = {125372},
  issn = {0957-4174},
  url = {https://doi.org/10.1016/j.eswa.2024.125372},
}

@article{durham1998sequential,
  title = {A Sequential Design for Maximizing the Probability of a Favourable Response},
  author = {Durham, S.d. and Flournoy, N. and Li, W.},
  date = {1998},
  year = {1998},
  journal = {Canadian Journal of Statistics},
  volume = {26},
  number = {3},
  pages = {479--495},
  issn = {1708-945X},
  url = {https://doi.org/10.2307/3315771},
}

@article{betlei2024rladvertising,
  title = {Maximizing the Success Probability of Policy Allocations in Online Systems},
  author = {Betlei, Artem and Vladimirova, Mariia and Sebbar, Mehdi and Urien, Nicolas and Rahier, Thibaud and Heymann, Benjamin},
  date = {2024-03-24},
  year = {2024},
  journal = {Proceedings of the AAAI Conference on Artificial Intelligence},
  volume = {38},
  number = {10},
  pages = {11061--11068},
  issn = {2374-3468},
  url = {https://doi.org/10.1609/aaai.v38i10.28982}
}

@article{sun2015sss,
  title = {Fast Statistical Analysis of Rare Circuit Failure Events via Scaled-Sigma Sampling for High-Dimensional Variation Space},
  author = {Sun, Shupeng and Li, Xin and Liu, Hongzhou and Luo, Kangsheng and Gu, Ben},
  date = {2015-07},
  year = {2015},
  journal = {IEEE Transactions on Computer-Aided Design of Integrated Circuits and Systems},
  volume = {34},
  number = {7},
  pages = {1096--1109},
  issn = {1937-4151},
  url = {https://doi.org/10.1109/TCAD.2015.2404895},
}
\bibliographystyle{icml2026}

%%%%%%%%%%%%%%%%%%%%%%%%%%%%%%%%%%%%%%%%%%%%%%%%%%%%%%%%%%%%%%%%%%%%%%%%%%%%%%%
%%%%%%%%%%%%%%%%%%%%%%%%%%%%%%%%%%%%%%%%%%%%%%%%%%%%%%%%%%%%%%%%%%%%%%%%%%%%%%%
% APPENDIX
%%%%%%%%%%%%%%%%%%%%%%%%%%%%%%%%%%%%%%%%%%%%%%%%%%%%%%%%%%%%%%%%%%%%%%%%%%%%%%%
%%%%%%%%%%%%%%%%%%%%%%%%%%%%%%%%%%%%%%%%%%%%%%%%%%%%%%%%%%%%%%%%%%%%%%%%%%%%%%%
\newpage
\appendix
\onecolumn

\section{Proofs of Theoretical Results} \label[appendix]{sec:theoretical-results}
In this appendix we provide proofs of the theoretical results stated in the main text.
We restate all results before proving them for convenience.

\begin{manuallemma}{\ref*{thm:true-fail-prob-cts}}[Formal Version]
    Suppose that \(f: \sY \to \R\) is continuous and \(\vg: \sX \times \sU \to \sY\) is continuous.
    Suppose further that the distribution of \(\mathbb{P}_\rvu\) has no mass on the limit state surface or boundary of \(\sYfeas\), so that for any \(\vx \in \sX\),
    \begin{align*}
        \mathbb{P}_\rvu\bigl(f \circ \vg(\vx, \rvu) = c \bigr) &= 0, \\
        \mathbb{P}_\rvu\bigl( \vg(\vx, \rvu) \in \partial \sYfeas \bigr) &= 0.
    \end{align*}
    Then the failure probability \(P : \sX \to [0, 1]\) is continuous. If also \(\sX\) is compact then \(P\) attains its minimum value for some \(\vx^* \in \sX\).
\end{manuallemma}
\begin{proof}
Let \(\vx \in \sX\) and let \(\sS_\vx^{(1)} = \{\vu \in \sU : f \circ \vg(\vx, \vu) = c\}\) and \(\sS_\vx^{(2)} = \{\vu \in \sU : \vg(\vx, \vu) \in \partial \sYfeas\}\) denote the components of the limit state surface for \(\vx\) and let \(\sS_\vx = \sS_\vx^{(1)} \cup \sS_\vx^{(2)}\).
Then \(\mathbb{P}(\rvu \in \sU \setminus \sS_\vx) = 1\).
Observe that 
\begin{equation*}
    P(\vx) = \E_\rvu\Bigl[\mathbb{I}_{\{f \circ \vg(\vx, \rvu) \geq c\}} \mathbb{I}_{\{\vg(\vx, \rvu) \in \sYfeas\}} + \mathbb{I}_{\{\vg(\vx, \rvu) \notin \sYfeas\}}\Bigr].
\end{equation*}

Let \(\vu \in \sU \setminus \sS_\vx\) and let \((\vx_n)\) be a sequence in \(\sX\) with \(\vx_n \to \vx\) as \(n \to \infty\). Then \(f \circ \vg (\cdot, \vu) : \sX \to \R\) is continuous and \(f \circ \vg(\vx_n, \vu) \to f \circ \vg(\vx, \vu)\) as \(n \to \infty\). Further, since \(f \circ \vg(\vx, \vu) \neq c\), it follows \(\mathbb{I}_{\{f \circ \vg(\vx_n, \vu) \geq c\}}\) is eventually constant and so \(\mathbb{I}_{\{f \circ \vg(\vx_n, \vu) \geq c\}} \to \mathbb{I}_{\{f \circ \vg(\vx, \vu) \geq c\}}\).
Similarly, \(\vg(\vx, \vu) \notin \partial \sYfeas\) so by the same argument \(\mathbb{I}_{\{\vg(\vx_n, \rvu) \in \sYfeas\}} \to \mathbb{I}_{\{\vg(\vx, \rvu) \in \sYfeas\}}\).

This holds for \(\mathbb{P}_\rvu\)-almost every \(\vu \in \sU\), so by the Dominated Convergence Theorem, \(P(\vx_n) \to P(\vx)\) and so \(P\) is continuous at \(\vx\). Since \(\vx\) was arbitrary, \(P\) is everywhere continuous.

Finally, if \(\sX\) is also compact, then as a continuous function on a compact set, \(P\) attains its minimum for some \(\vx^* \in \sX\).
\end{proof}

Before proving \cref{thm:pred-fail-prob-cts}, we formally state the required regularity conditions.
\begin{assumption} \label{ass:u-no-mass-on-bndry-gp}
    \phantom{.}
    \begin{enumerate}[\alph*)]
        \item For any \(\vy_1', \dots, \vy_N' \in \sY\) (not necessarily the points chosen during optimization) and any \(v_1, \dots, v_N \in \sR\), the set \(\bigl\{\vy \in \sY \;:\; \Var\bigl[f(\vy) \,\big|\, f(\vy_1') = v_1, \dots, f(\vy_N') = v_N \bigr] = 0 \bigr\}\) has Lebesgue measure zero,
        \item For any \(\vx \in \sX\), \(\vg(\vx, \rvu)\) (where \(\rvu \sim \mathbb{P}_\rvu\)) is absolutely continuous with respect to Lebesgue measure on \(\sY\),
        \item For any \(\vx \in \sX\), \(\mathbb{P}_\rvu\bigl( \vg(\vx, \rvu) \in \partial \sYfeas \bigr) = 0\).
    \end{enumerate}
\end{assumption}
The first of these assumptions ensures that the GP is sufficiently non-degenerate. Combined with the second assumption, this will be sufficient to deduce that \(\mathbb{P}(f \circ \vg(\vx, \rvu) = c \,|\, \mathcal{F}_n) = 0\). That is, that \(\mathbb{P}_\rvu\) and \(f\) jointly place no mass on the, now stochastic, limit state surface. Combined with the final part of \cref{ass:u-no-mass-on-bndry-gp} these conditions are analogous to those in \cref{thm:true-fail-prob-cts}.

\begin{manuallemma}{\ref*{thm:pred-fail-prob-cts}}[Formal Version]
    Suppose that \(f\) is a sample-continuous Gaussian process, that \(\vg : \sX \times \sU \to \sY\) is continuous and that \cref{ass:u-no-mass-on-bndry-gp} holds. Then \(P_n: \sX \to [0, 1]\) is continuous. If also \(\sX\) is compact then \(P_n\) attains its minimum value for some \(\vx^*_n \in \sX\).
\end{manuallemma}
\begin{proof}
    Write \((\Omega, \mathcal{F}, \mathbb{P})\) for the underlying probability space.
    
    We will first show that, for all \(\vx \in \sX\), \(\mathbb{P}(f \circ \vg(\vx, \rvu) = c \,|\, \mathcal{F}_n) = 0\) almost surely.
    Indeed, the first two parts of \cref{ass:u-no-mass-on-bndry-gp} give that for all \(\vx \in \sX\), \(\mathrm{Var}[ f \circ \vg(\vx, \rvu) \,|\, \rvu, \mathcal{F}_n] > 0\) almost surely.
    Thus, \(f \circ \vg(\vx, \rvu) \,|\, \rvu, \mathcal{F}_n\) is an almost surely non-degenerate Gaussian random variable and hence \({\mathbb{P}_\rvu(f \circ \vg(\vx, \rvu) = c \,|\, \rvu, \mathcal{F}_n) = 0}\) almost surely.
    Therefore,
    \[\mathbb{P}\bigl( f \circ \vg(\vx, \rvu) = c \,\big|\, \mathcal{F}_n \bigr) = \E\bigl[ \mathbb{P}\bigl( f \circ \vg(\vx, \rvu) = c \,\big|\, \rvu, \mathcal{F}_n \bigr) \,\big|\, \mathcal{F}_n\bigr] = 0 \quad \text{a.s.}\]

    The remainder of the proof follows that of \cref{thm:true-fail-prob-cts}.
    
    Let \(\vx \in \sX\), and define events \(A = \{\omega \in \Omega : f(\vg(\vx, \rvu(\omega); \omega) \geq c\}\) and \(B = \{\omega \in \Omega : \vg(\vx, \rvu(\omega)) \notin \sYfeas\}\).
    These are indeed events because \(f\) is sample-continuous (and so jointly measurable) and \(\vg\) is continuous.
    Then \({P_n(\vx) = \mathbb{E}[\mathbb{I}_A (1 - \mathbb{I}_B) + \mathbb{I}_B \,|\, \mathcal{F}_n]}\) is also well-defined.
    
    Let \((\vx_m)\) be a sequence in \(\sX\) with \(\vx_m \to \vx\) as \(m \to \infty\) and define events \(A_m\) and \(B_m\) as for \(A\) and \(B\) but with \(\vx_m\) in place of \(\vx\).
    That is, \(A_m = \{\omega \in \Omega : f(\vg(\vx_m, \rvu(\omega); \omega) \geq c\}\), \(B_m = \{\omega \in \Omega : \vg(\vx_m, \rvu(\omega)) \notin \sYfeas\}\) and \(P_n(\vx_m) = \mathbb{E}[\mathbb{I}_{A_m} (1 - \mathbb{I}_{B_m}) + \mathbb{I}_{B_m} \,|\, \mathcal{F}_n]\).

    Since \(\vg\) is continuous, we conclude \(f (\vg(\cdot, \rvu))\) has almost surely continuous sample paths.
    Further, \({\mathbb{P}(f(\vg(\vx, \rvu)) = c \,|\, \mathcal{F}_n) = 0}\) and so \(\mathbb{I}_{A_m}\) is almost surely eventually constant, and hence \(\mathbb{I}_{A_m} \to \mathbb{I}_A\) as \(m \to \infty\).
    A similar argument establishes that \(\mathbb{I}_{B_m} \to \mathbb{I}_B\) almost surely, and so the Dominated Convergence Theorem gives \(P_n(\vx_m) \to P_n(\vx)\) as \(m \to \infty\).

    Finally, if \(\sX\) is also compact, then as a continuous function on a compact set, \(P_n\) attains its minimum for some \(\vx^*_n \in \sX\).
\end{proof}

\thmkgnonneg*
\begin{proof}
    For all \(\vx' \in \sX\)
    \begin{align*}
        \max_{\vx \in \sX} R_{n+1}(\vx) &\geq R_{n+1}(\vx') \\
        \Rightarrow \quad \E\biggl[\max_{\vx \in \sX} R_{n+1}(\vx) \,\bigg|\, \mathcal{F}_n, \vy_{n+1} = \vy \biggr] &\geq \E\bigl[ R_{n+1}(\vx') \,\big|\, \mathcal{F}_n, \vy_{n+1} = \vy \bigr] \geq R_n(\vx')
    \end{align*}
    where the last inequality follows from Jensen's inequality because \(z \mapsto -\log z\) is convex.
    This holds for all \(\vx'\) and so
    \begin{gather*}
        \E\biggl[\max_{\vx \in \sX} R_{n+1}(\vx) \,\bigg|\, \mathcal{F}_n, \vy_{n+1} = \vy \biggr] \geq \max_{\vx \in \sX} R_n(\vx) \\
        \Rightarrow\quad \alpha^\text{KG-MR}_n(\vy) = \E\biggl[\max_{\vx \in \sX} R_{n+1}(\vx) \,\bigg|\, \mathcal{F}_n, \vy_{n+1} = \vy \biggr] - \max_{\vx \in \sX} R_n(\vx) \geq 0.
    \end{gather*}
\end{proof}

\section{Optimization of the One-Shot Knowledge Gradient Approximations} \label[appendix]{sec:kg-optimization}
We optimize the one-shot KG-MR acquisition function~\eqref{eq:kg-oneshot} using multi-start L-BFGS-B.
Care must be taken when choosing starting locations for the multi-start optimization since the optimization landscape contains many local minima.
Indeed, suppose that \(\hat{R}_{n+1}(\vx; \vy, z_j)\) has \(m\) local minima in \(\vx\). Then the maximand on the right-hand side of~\eqref{eq:kg-oneshot} has \(m^{N_v}\) local minima in \(\vx_1, \dots, \vx_{N_v}\) each with its own optimal \(\vy\).

In our experiments we adopt the following procedure to choose initial points. First, we evaluate the discrete KG-MR~\eqref{eq:kg-discrete} at a large number \(N_\mathrm{raw}\) of \(\vy\)-values. We then choose \(N_\mathrm{restarts}\) of these using Boltzmann sampling to randomly select starting locations which favor those with higher values. If the best value was not chosen, we replace the last sampled value with the best value. These values of \(\vy\) and their corresponding \(\vx_1, \dots, \vx_{N_v} \in \sXd\) are used to initialize the multi-start L-BFGS-B optimization of~\eqref{eq:kg-oneshot}. This implementation of Boltzmann sampling is standard within BoTorch.

\section{Baseline Algorithms} \label[appendix]{sec:baseline-algos}
In this appendix, we describe the algorithms which exist in the literature which were used as baselines.

\subsection{The Method of \texorpdfstring{\citet{huang2010egoReliability}}{Huang and Chan (2010)}}
\citet{huang2010egoReliability} propose a Bayesian optimization method which chooses between four acquisition functions. If there are currently no feasible samples, then the probability of feasibility~\eqref{eq:huang-chan-f} is optimized. Otherwise, three other acquisition functions are tried in order until one yields a point at least a distance \(\varepsilon_s\) from all existing samples.
First~\eqref{eq:huang-chan-ls} is tried which seeks to maximize distance to existing observations while being within \(\Delta\) of the predicted limit state surface.
If that fails, \eqref{eq:huang-chan-tn} is maximized to seek new, disconnected areas of the limit state surface, known as tunneling.
Finally, if all else fails, \eqref{eq:huang-chan-mv} is optimized to seek the point with maximum variance and hence reduce uncertainty in the model.
Listed together, the acquisition functions are,
\begin{subequations}\label{eq:huang-chan}
\begin{align}
    \alpha_n^\mathrm{F}(\vy) &= \mathbb{P}(f(\vy) \leq c) = \Phi{\biggl(\frac{c - \mu_n(\vy)}{\sqrt{k_n(\vy, \vy)}}\biggr)}, \label{eq:huang-chan-f}\\
    \alpha_n^\mathrm{LS}(\vy) &= \mathbb{I}_{\{|\mu_n(\vy) - c| \leq \Delta\}} \, \min_{i=1,\dots,n} \frac{\|\vy_i - \vy\|}{\|\vb - \va\|}, \label{eq:huang-chan-ls} \\
    \alpha_n^\mathrm{TN}(\vy) &= \alpha_n^\mathrm{F}(\vy) \min_{\substack{i=1,\dots,n \\ \text{s.t. } f(\vy_i) \leq c}}{\!\!\frac{\|\vy_i - \vy\|}{\|\vb - \va\|}}, \label{eq:huang-chan-tn} \\
    \alpha^\mathrm{MV}_n(\vy) &= \sqrt{k_n(\vy, \vy)}. \label{eq:huang-chan-mv}
\end{align}
\end{subequations}
As before, here \(\vy_1, \dots, \vy_n\) are the locations of the \(n\) observations made so far, and \(\va, \vb\) define the bounds of the problem such that \(\sYfeas = \prod_{j=1}^{d_y} [a_j, b_j]\).
Note that in~\eqref{eq:huang-chan} we have kept the names from \citep{huang2010egoReliability}, but \(\alpha^\mathrm{MV}_n\) is a different acquisition function to the one used for TS-MR defined in \cref{sec:thompson-sampling}.

We optimize the limit state exploration function \(\alpha_n^\mathrm{LS}\) from~\eqref{eq:huang-chan-ls} using DIRECT \citep{jones1993lipschitzian,gablonsky2001locallybiased} and all other acquisition functions using multi-start L-BFGS-B.
For numerical stability, we optimize the logarithm of~\eqref{eq:huang-chan-f} rather than the acquisition function itself.
In \citep{huang2010egoReliability}, the GP hyperparameters are fitted using a variogram, however for a fair comparison with the other algorithms, we use the same GP model and maximum a posteriori~(MAP) estimation method for the hyperparameters for all algorithms.
Rather than implementing the termination criteria from this work, we instead run the algorithm for the same fixed evaluation budget as the other algorithms for a fair comparison.
In the experimental section, we denote this algorithm HC.

\subsection{The Method of \texorpdfstring{\citet{bichon2008egra}}{Bichon et al. (2008)}}
\label[appendix]{sec:baseline-algos-egra}
\citet{bichon2012efficient} propose a family of methods to tackle three problems in reliability-based design optimization, which include the problem posed in~\eqref{eq:problem-statement}.
The work extends their earlier work in active learning in which they proposed efficient global reliability analysis (EGRA) to efficiently estimate the failure probability \citep{bichon2008egra}.
They present three possible ways to combine BO with EGRA:
\begin{enumerate}[nosep]
    \item A nested approach using a separate surrogate for the function \(\vu \mapsto f(\vg(\vx, \vu))\) for each \(\vx \in \sX\) visited,
    \item A nested approach using a single surrogate to model \(h(\vx, \vu) = f(\vg(\vx, \vu))\),
    \item A sequential approach which alternates BO to find a good \(\vx\) and EGRA to determine the limit state surface.
\end{enumerate}
Their general formulation of the problem does not assume the composition \(f(\vg(\vx, \rvu))\) meaning the limit state surface could be different for different \(\vx\).
However, in our case, the surface is essentially the same for each \(\vx\) after an appropriate transform depending on \(\vg\).
\citet{bichon2012efficient} include such a problem in section 5.3 (steel column), and remark that the algorithms they propose are essentially identical in this case.
Further, their formulation of the problem is a constrained optimization, with deterministic constraints coming from the original objective.
At the first design point, a full EGRA analysis is performed, and subsequent design points are used to search the design space for points which satisfy these constraints using constrained BO.
However, in the problem considered in this paper, \eqref{eq:problem-statement} contains no constraints from an original objective.
Therefore, to the best of our understanding, in the setting of~\eqref{eq:problem-statement} where there are no black-box constraints, the algorithm proposed in \citep{bichon2012efficient} reduces to EGRA.

The EGRA algorithm was inspired by EI \citep{jones1998ego} and can be summarized as maximizing
\begin{equation} \label{eq:bichon-egra}
    \alpha_n^\mathrm{EGRA}(\vy) = \E\Bigl[ \max\Bigl\{\epsilon - \bigl| c - f(\vy) \bigr|,\, 0\Bigr\} \,\Big|\, \mathcal{F}_n \Bigr]
\end{equation}
where \(\epsilon = \kappa \sqrt{k_n(\vy, \vy)}\) is proportional to the marginal posterior standard deviation of \(f\).
Like HC, it explores the whole limit state surface, rather than concentrating on the parts contributing most to the failure probability at the point with maximal reliability.

To ensure a meaningful comparison, in the experiments we use the same GP model for all algorithms, with MAP estimated hyperparameters, rather than using either of the models from \citep{bichon2008egra} or \citep{bichon2012efficient}.
For the value of \(\kappa\), we use \(\kappa = 2\) as recommended in \citet{bichon2008egra,bichon2012efficient}. \citet{bect2011probFailure} found a value of \(\kappa = 0.5\) to give better results for the estimation of failure probabilities, however for maximizing reliability we found little difference, as is shown in \cref{fig:results-rare-egra-sensitivity}.

\begin{figure}[htbp]
    \centering
    \includegraphics[width=\linewidth]{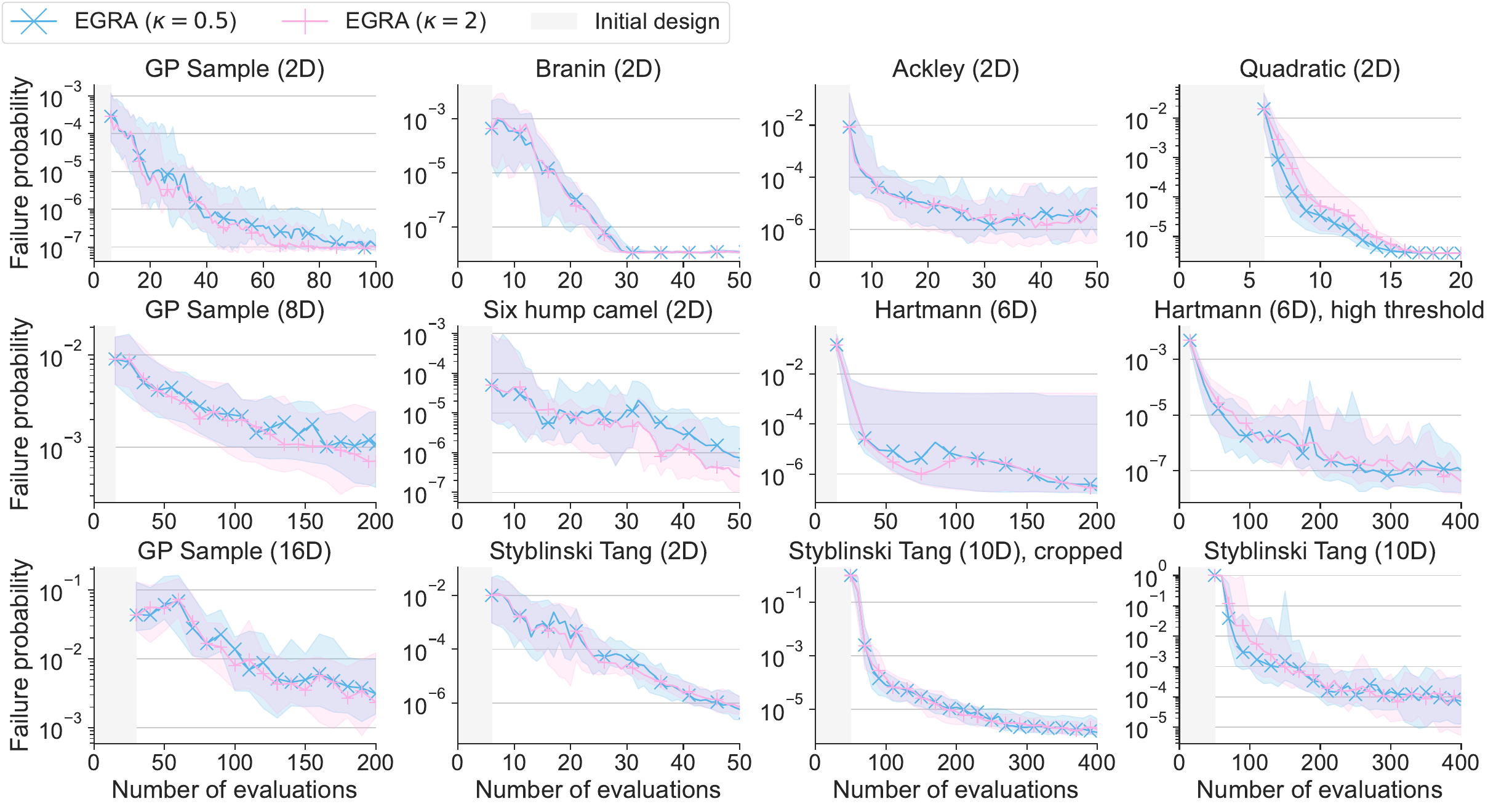}
    \caption[EGRA algorithm sensitivity study.]{Failure probabilities of the two EGRA variants for the 12 test problems in \cref{fig:results-rare}. The failure probability associated with the recommended solution is shown as a function of number of evaluations of the expensive black-box function. The solid lines show the median failure probability over 30 repeats and the shaded regions show the upper and lower quartiles.}
    \label{fig:results-rare-egra-sensitivity}
\end{figure}

\section{Further Experimental Details} \label[appendix]{sec:further-experimental-details}
In this appendix, we give further details on the experimental procedure which is important for the reproduction of results but can be safely omitted on first reading.

\subsection{Indicator Smoothing} \label[appendix]{sec:further-experimental-details--indicator-smoothing}
In \cref{sec:approximations} we briefly introduced a smooth approximation to \(\mathbb{I}_{\{\vg(\vx, \vu) \in \sYfeas\}}\) used to resolve the lack of differentiability in~\eqref{eq:pn-naive-approx}.

We wish to avoid evaluating \(f\) outside \(\sYfeas\) because, for example, the black-box function may be given by a simulation which raises an error or gives outputs which violate the GP modeling assumption outside this region. Therefore, we use an asymmetric smoothing function \(\iota(\vg(\vx, \vu); \delta) \approx \mathbb{I}_{\{\vg(\vx, \vu) \in \sYfeas\}}\) which has a value of zero when \(\vg(\vx, \vu) \notin \sYfeas\) and builds smoothly inside the feasible region.
Here \(\delta > 0\) controls the width of the interval around the boundary affected by the smoothing.
\cref{fig:bounds-smoothing} shows the contribution of this approximation to the probability of failure in one dimension.

\begin{figure}[htb]
    \centering
    \includegraphics[width=0.6\linewidth]{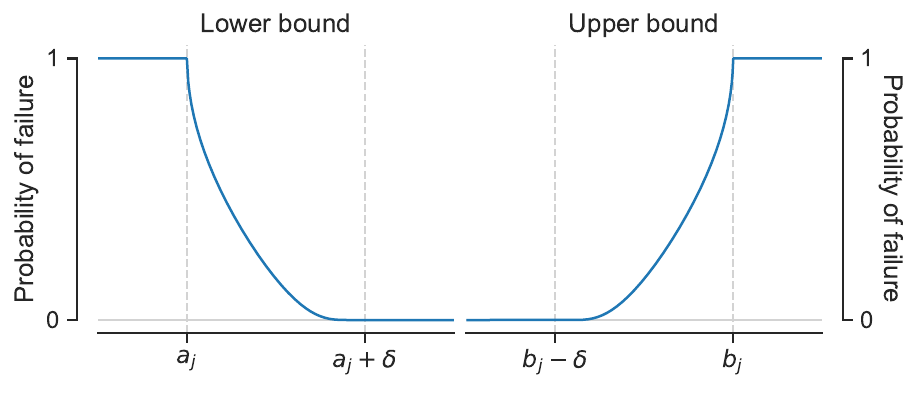}
    \caption{The approximation \(1 - \iota(\vg(\vx, \vu); \delta)\) of \(\mathbb{I}_{\{\vg(\vx, \vu) \notin \sYfeas\}}\) is shown as a function of \(\vg(\vx, \vu)\). This is defined in~\eqref{eq:bounds-indicator-smoothing} and used to smooth discontinuities in~\eqref{eq:pn-naive-approx}.}
    \label{fig:bounds-smoothing}
\end{figure}

The precise formula used for the bounds smoothing when \(\sYfeas = \prod_{j=1}^{d_y} [a_j, b_j]\) is a hyperrectangle is
\begin{equation} \label{eq:bounds-indicator-smoothing}
    \iota(\vg(\vx, \vu); \delta) = \prod_{j=1}^{d_y} G\biggl(\frac{g_j(\vx, \vu) - a_j}{\delta}\biggr) G\biggl(\frac{b_j - g_j(\vx, \vu)}{\delta}\biggr)
\end{equation}
where \(\delta\) is a smoothing parameter controlling the width of the smoothing region, and where \(g_j\) is the function evaluating the \(j\)th component of \(\vg\). For \(z \in (0, 1)\), the function \(G(z) = F(z / 1-z)\) where \(F(\cdot)\) is the cumulative density function of a Gamma distribution with shape parameter \(0.5\) and rate parameter \(1\). For \(z \leq 0\) we define \(G(z) = 0\) and for \(z \geq 1\) we set \(G(z) = 1\).
This formula comes from considering the \(j\)th lower bound \(a_j\) to be a random variable \(A_j = a_j + \delta \frac{Z_j}{1 + Z_j}\) where \(Z_j \sim \mathrm{Gamma}(0.5, 1)\), and similarly for the upper bounds.

In practice, we keep \(\delta\) very small, using a value of \(\delta = \min(0.05 \ell_\text{min}, 0.1)\) where \(\ell_\text{min}\) is the minimum side-length of \(\sYfeas\).

\subsection{Model Parameters}\label[appendix]{sec:further-experimental-details--model-params}
For priors on the GP hyperparameters, we use \(s^2 \sim \mathrm{Gamma}(2, 0.15)\) for the output scale and \(\ell_1, \dots, \ell_d \sim \mathrm{Gamma}(3, 10)\). Here, gamma distributions are parametrized using the shape and rate convention. The constant mean has no prior (i.e. an improper prior) and variance of the observation noise is fixed to \(\sigma^2 = 0.01^2\) for numerical stability. Actual observations \(v_i = f(\vy_i)\) have no noise added.

The GP hyperparameters are refitted every time a new observation is collected.
Before fitting, we normalize and standardize the observations so that the inputs lie in \([0, 1]^{d_y}\) and the outputs have zero mean and unit variance. When making predictions with the GP, the inputs and outputs are transformed and untransformed appropriately. This is standard in BoTorch.

\subsection{Optimizing the Recommendations} \label[appendix]{sec:further-experimental-details--optimizing-recommendations}
Recommendations \(\vx^*_n\) (\(n = 1, 2, \dots\)) are generated from the GP surrogate using~\eqref{eq:recommendations-approx} for all algorithms.
For the 2D problems, 10 restarts are chosen by evaluating the posterior mean on 1024 Sobol' points and using Boltzmann sampling, manually ensuring that the best found value is included as a start if necessary.
These starting values are then optimized using L-BFGS-B.
A qMC estimate with \(N_u = 1024\) is used in~\eqref{eq:recommendations-approx-mcapprox} and the bounds smoothing parameter \(\delta\) in~\eqref{eq:recommendations-approx-jhat} is set to \(\delta = \min(0.05 \ell, 0.1)\) where \(\ell\) is the smallest side length in \(\sYfeas\).
A small nugget is added to the GP variance for numerical stability.

For the higher-dimensional problems (i.e. greater than 2D) the same is done, but the best value found is then fine-tuned with L-BFGS-B using a \(N_u = 131,072\), since these problems benefit empirically from a larger value of \(N_u\).

\subsection{Test Problems}\label[appendix]{sec:further-experimental-details--test-problems}
The test problems used, along with the threshold \(c\) and perturbation distribution \(\mathbb{P}_\rvu\) are summarized in this section. All problems use additive perturbations, \(\vg(\vx, \vu) = \vx + \vu\).

\subsubsection*{Branin (2D)}
The feasible domain is \(\sYfeas = [-5, 10] \times[0, 15]\), the threshold is \(c=60\) and the formula is
\begin{equation}
    f(\vy) = \left(y_2 - \frac{5.1}{4 \pi^2} y_1^2 + \frac{5}{\pi} y_1 - 6\right)^2 + 10\left(1-\frac{1}{8\pi}\right) \cos(y_1) + 10.
\end{equation}
The perturbation distribution is \(\mathbb{P}_\rvu = \mathcal{N}\bigl(0, \mathrm{diag}(0.8, 0.8)^2 \bigr)\).

\subsubsection*{Six Hump Camel (2D)}
The feasible domain is \(\sYfeas = [-3, 3] \times [-2, 2]\), the threshold is \(c=2\) and the formula is
\begin{equation}
    f(\vy) = \left(4 - 2.1 y_1^2 + \frac{1}{3}y_1^4\right)y_1^2 + y_1 y_2 + 4\bigl(y_2^2 - 1\bigr) y_2^2.
\end{equation}
The perturbation distribution is \(\mathbb{P}_\rvu = \mathcal{N}\bigl(0, \mathrm{diag}(0.2, 0.1)^2\bigr)\).

\subsubsection*{Ackley (2D)}
The feasible domain is \(\sYfeas = [-32.768, 32.768]^2\), the threshold is \(c=20.5\) and the formula is
\begin{equation}
    f(\vy) = -20 \exp\left(-0.2 \sqrt{\frac{1}{2}\bigl(y_1^2 + y_2^2\bigr)}\right) - \exp\left( \frac{1}{2} \bigl[\cos(2 \pi y_1) +  \cos(2 \pi y_2) \bigr] \right) + 20 + e.
\end{equation}
The perturbation distribution is \(\mathbb{P}_\rvu = \mathcal{N}\bigl(0, \mathrm{diag}(3, 3)^2)\).

\subsubsection*{Quadratic (2D)}
The feasible domain is \(\sYfeas = [0, 1]^2\), the threshold is \(c=0.09\) and the formula is
\begin{equation}
    f(\vy) = (y_1 - 0.3)^2 + (y_2 - 0.3)^2.
\end{equation}
The perturbation distribution is \(\mathcal{N}\bigl(0, \mathrm{diag}(0.06, 0.06)^2\bigr)\).

\subsubsection*{Styblinski-Tang (2D, 10D)}
Writing \(d = 2, 10 \) for the dimension, the feasible domain is \(\sYfeas = [-5, 5]^d\). The formula is
\begin{equation}
    f(\vy) = \frac{1}{2} \sum_{i=1}^d \bigl(y_i^4 - 16 y_i^2 + 5 y_i \bigr).
\end{equation}
For the 2D problem, the threshold is \(c = -20\) and the perturbation distribution is \(\mathbb{P}_\rvu = \mathcal{N}\bigl(0, \mathrm{diag}(0.25, 0.5)^2\bigr)\).
For the 10D problem, the threshold is \(c = -300\) and the perturbation distribution is \(\mathbb{P}_\rvu = \mathcal{N}\bigl(0, \mathrm{diag}(0.4, 0.4, 0.4, 0.1, \dots, 0.1)^2 \bigr)\).
For the cropped version of the 10D problem, the feasible domain is reduced to \(\sYfeas = [-5, 0] \times [-5, 5]^3 \times [-5, 0]^6\).

\subsubsection*{Hartmann (6D)}
The feasible domain is \(\sYfeas = [0, 1]^6\) and the formula is
\begin{equation}
    f(\vy) = - \sum_{i=1}^4 \alpha_i \exp\left( - \sum_{j=1}^6 A_{ij} (y_j - P_{ij})^2 \right)
\end{equation}
where
\begin{gather*}
    \alpha = \begin{pmatrix}1 & 1.2 & 3 & 3.2\end{pmatrix}^T \\
    \mA = \begin{pmatrix}
        10 & 3 & 17 & 3.5 & 1.7 & 8 \\
        0.05 & 10 & 17 & 0.1 & 8 & 14 \\
        3 & 3.5 & 1.7 & 10 & 17 & 8 \\
        17 & 8 & 0.05 & 10 & 0.1 & 14
    \end{pmatrix} \\
    \mP = \begin{pmatrix}
        0.1312 & 0.1696 & 0.5569 & 0.0124 & 0.8283 & 0.5886 \\
        0.2329 & 0.4135 & 0.8307 & 0.3736 & 0.1004 & 0.9991 \\
        0.2348 & 0.1451 & 0.3522 & 0.2883 & 0.3047 & 0.6650 \\
        0.4047 & 0.8828 & 0.8732 & 0.5743 & 0.1091 & 0.0381
    \end{pmatrix}.
\end{gather*}
For the standard version, the threshold is \(c = -1\) and the perturbation distribution is \(\mathbb{P}_\rvu = \mathcal{N}\bigl(0, \mathrm{diag}(0.05, \dots, 0.05)^2\bigr)\).
For the version with the high threshold, we use \(c = -0.05\) with \(\mathbb{P}_\rvu = \mathcal{N}\bigl(0, \mathrm{diag}(0.07, \dots, 0.07)^2\bigr)\).
The perturbation distribution is \(\mathbb{P}_\rvu = \mathcal{N}\bigl(0, \mathrm{diag}(0.05, \dots, 0.05)^2\bigr)\).

\subsubsection*{GP Sample (2D, 8D, 16D)}
The GP sample test problems were generated using a random Fourier feature approximation with 1024 Fourier features.
The GP has zero mean and a Mat\'ern-5/2 kernel.
The output scale was 10 and the length scales were approximately \(0.2 \sqrt{d}\) to maintain the problem complexity with increasing dimension.
The thresholds were set so that 33\% of the volume of \(\sYfeas\) is infeasible.
The perturbation distribution was chosen as an isotropic Gaussian with scale chosen to achieve an appropriate optimal failure probability.
The absolute values used are summarized in \cref{tab:gp-problem-parameters}.

The feasible domain is \(\sYfeas = [0, 1]^d\).

\begin{table}[h]
    \centering
    \caption{Parameters used to generate the GP test problems.}
    \label{tab:gp-problem-parameters}
    \begin{tabular}{rrrrl}
    \toprule
        Dimension & Isotropic length scale & Threshold, \(c\) & Failure volume & Perturbation distribution, \(\mathbb{P}_\rvu\) \\
    \midrule
        2 & 0.28 & 3.6 & 33\% & \(\mathcal{N}\bigl(0, 0.04^2 \mI \bigr)\) \\
        8 & 0.57 & 1.2 & 33\% & \(\mathcal{N}\bigl(0, 0.06^2 \mI \bigr)\) \\
        16 & 0.8 & 0.6 & 33\% & \(\mathcal{N}\bigl(0, 0.07^2 \mI \bigr)\) \\
    \bottomrule
    \end{tabular}
\end{table}

\subsection{Algorithm Parameters} \label[appendix]{sec:further-experimental-details--algorithm-parameters}
For all experiments, the number of fantasies used in osKG-MR and dKG-MR is \(N_v = 64\).
The number of qMC points used for the expectation over \(\rvu\) in both KG-MR algorithms and in TS-MR is \(N_u = 64\).
For dKG-MR the size of the discretization \(\sXd\) is \(N_x = 512\).
Where importance sampling is used (i.e. for experiments with extreme failure probabilities), we use a scale factor of \(\tau = 3\).
The bounds smoothing parameter \(\delta\) was set to \(\min(0.05 \ell_\text{min}, 0.1)\) where \(\ell_\text{min}\) is the smallest side length of the feasible domain \(\sYfeas\).
The threshold smoothing parameter in TS-MR was set to \(\rho = 0.01\).
For EGRA, we use \(\kappa = 2\) for most experiments, but \(\kappa = 0.5\) in the sensitivity study in \cref{fig:results-rare-egra-sensitivity} in \cref{sec:baseline-algos-egra}.
Multi-start L-BFGS-B uses 10 restarts for the optimization of the acquisition function.

No guidance is given in \citep{huang2010egoReliability} for how to set the parameters \(\varepsilon_s\) and \(\Delta\) in the HC algorithm.
We choose the minimum distance between points according to \(\varepsilon_s = 0.01 \ell_\mathrm{diagonal}\) where \(\ell_\mathrm{diagonal}\) is the length of the diagonal of the feasible box \(\sYfeas\).
This allows the minimum distance to grow with the dimension of the problem, since higher-dimensional problems naturally contain more volume and larger distances.
It also gives values in line with those used in \citep{huang2010egoReliability} where this value is reported.
For the parameter \(\Delta\) describing the width of the region of interest around the limit state surface, no values are reported in \citep{huang2010egoReliability} and so we adopt a best-effort approach.
Being a measure in objective space, intuitively the value of \(\Delta\) used should scale with the gradient \(\|\nabla f\|\) around the limit state surface.
For the 2D problems, we plotted the landscape and chose a value of \(\Delta\) to give a thin band around the limit state surface based on the contours.
For higher-dimensional problems, we chose an appropriate value using either the formula (in the case of Hartmann and Styblinski-Tang) or the histogram of function values (in the case of the GP problems).
Of course, this choice method would not be possible for a real black-box problem, and conveys an unfair advantage on the HC algorithm in our experiments.
However, since HC is largely beaten by KG-MR and TS-MR, this does not invalidate the conclusions of the comparison.
The parameters used for HC and the size \(n_0\) of the initial design used by all algorithms are given in \cref{tab:alg-parameters}.

The parameters used when generating recommendations are specified in \cref{sec:further-experimental-details--optimizing-recommendations}.

\begin{table}[h]
    \centering
    \caption{Experiment-specific parameters for the algorithms. For HC, these are the minimum distance between samples \(\varepsilon_s\) and the distance in objective space \(\Delta\) defining the band of interest around the limit state surface. Additionally, the size \(n_0\) of the initial design is used by all algorithms.}
    \label{tab:alg-parameters}
    \begin{tabular}{lrrr}
    \toprule
        & & \multicolumn{2}{c}{HC} \\
        \cmidrule(lr){3-4}
        Experiment & \(n_0\) & \(\varepsilon_s\) & \(\Delta\) \\
    \midrule
        GP (2D) & \(6\) & 0.014 & 0.6\\
        GP (8D) & \(15\) & 0.028 & 0.6\\
        GP (16D) & \(30\) & 0.04 & 0.6 \\
        Branin (2D) & \(6\) & 0.21 & 10 \\
        Six hump camel (2D) & \(6\) & 0.072 & 0.4 \\
        Styblinski-Tang (2D) & \(6\) & 0.14 & 10 \\
        Ackley (2D) & \(6\) & 0.93 & 0.2 \\
        Quadratic (2D) & \(6\) & 0.014 & 0.01 \\
        Hartmann (6D) & \(15\) & 0.024 & 0.02 \\
        Hartmann (6D), high threshold & \(15\) & 0.024 & 0.02 \\
        Styblinski-Tang (10D) & \(50\) & 0.32 & 10 \\
        Styblinski-Tang (10D), cropped & \(50\) & 0.22 & 10 \\
    \bottomrule
    \end{tabular}
\end{table}

\section{Example Query Patterns} \label[appendix]{sec:example-query-patterns}
\begin{figure}[p]
    \centering
    \includegraphics[width=\linewidth]{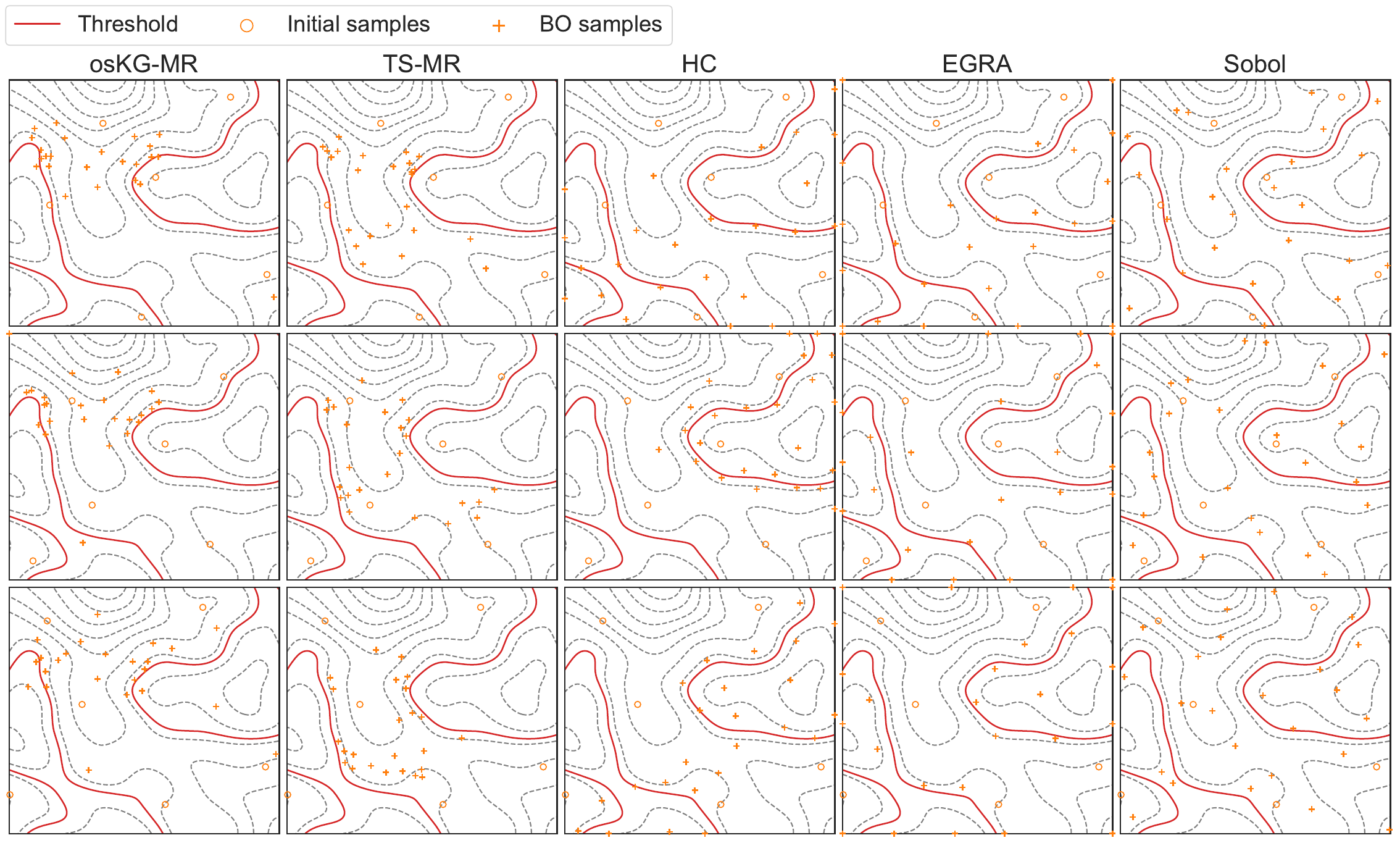}
    \caption{The first 30 sampling locations for five algorithms tested on the 2D GP test problem. The rows show three different runs, corresponding to three different initial designs.}
    \label{fig:example-query-patterns--gp2d}

    \vspace{\floatsep}

    \includegraphics[width=\linewidth]{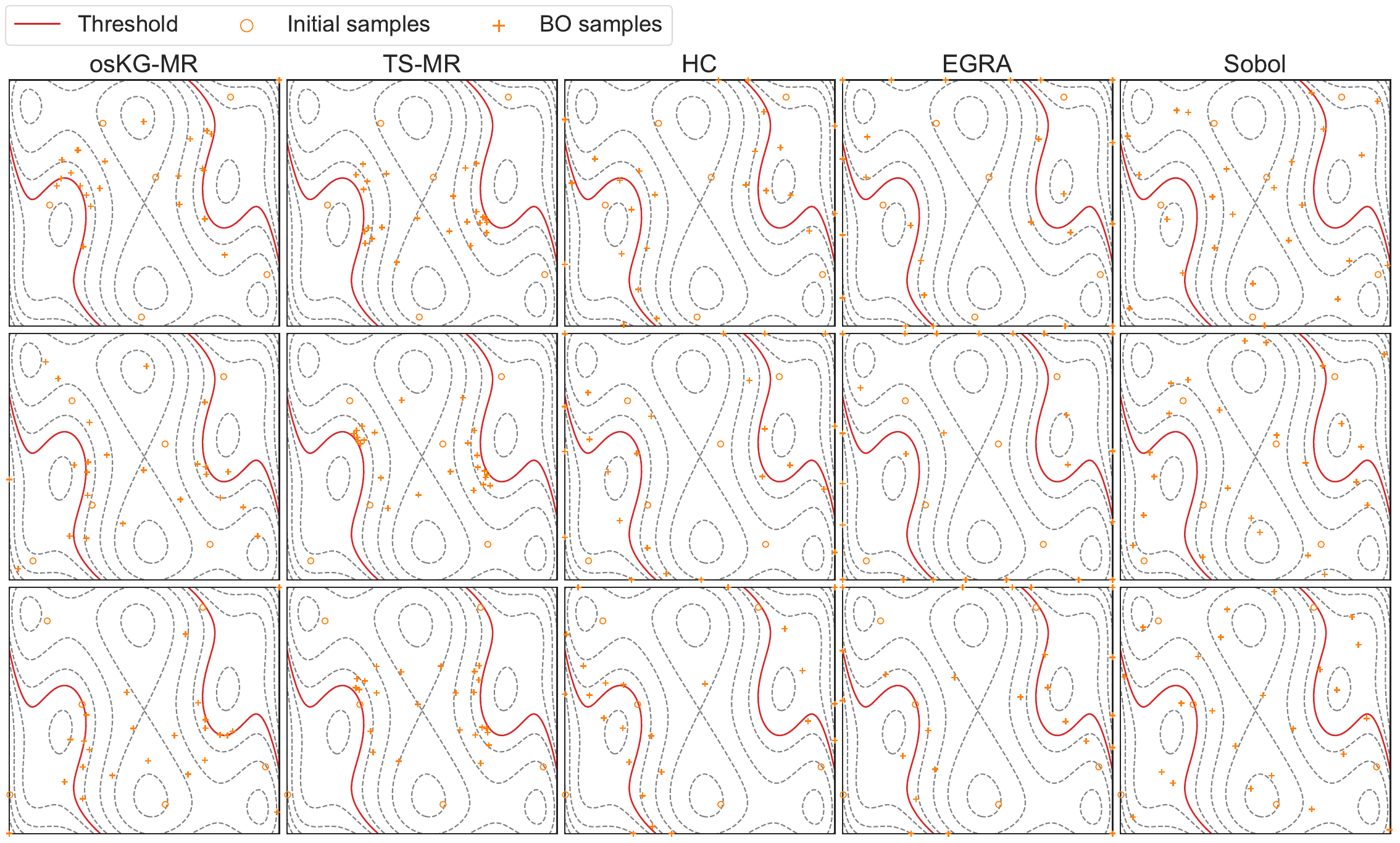}
    \caption{The first 30 sampling locations for five algorithms tested on the six hump camel test problem. The rows show three different runs, corresponding to three different initial designs.}
    \label{fig:example-query-patterns--six-hump-camel}
\end{figure}

\Cref{fig:example-query-patterns--gp2d,fig:example-query-patterns--six-hump-camel} show the first 30 samples collected by the five algorithms tested on the 2D GP and six hump camel test problems.
The osKG-MR and TS-MR algorithms clearly focus on the relevant part of the limit state surface, while the HC and EGRA algorithms sometimes initially focus on the wrong parts.
In one case, HC fails to discover all disconnected parts of the limit state surface.
The HC and EGRA algorithms also attempt to sample on the boundary \(\partial \sYfeas\) in places where the value of \(f\) is close to the threshold \(c\).
This phenomenon been observed in other works on robust Bayesian optimization \citep{qing2023robust} where the algorithm cannot efficiently reduce the uncertainty in the GP outside the boundary.
KG-MR and TS-MR naturally incorporate knowledge that perturbed solutions \(\vy \in \sY \setminus \sYfeas\) beyond the boundary will be infeasible and so do not suffer from this.

\section{Acquisition Function Timings} \label[appendix]{sec:acquisition-timings}
It is important when choosing an acquisition function to be aware of the time required to optimize the acquisition function, since this should be small compared with the time or cost required to evaluate the black-box function \(f\).
\Cref{tab:timings-2d,tab:timings-highdim} show the lower and upper quartiles of the time required to optimize the acquisition function for the problems in the middle two columns of \cref{fig:results-rare}.
These are not presented as a means to claim one algorithm is better than another, but rather to give an idea for how expensive the expensive black-box function \(f\) should be.
For example, if evaluations of \(f\) take of the order of an hour then spending a 30s to optimize the one-shot KG-MR for the cropped, 10D Styblinski-Tang problem would be negligible.

The KG-MR algorithms, and in particular the one-shot KG-MR, are computationally intensive, and thus for problems with \(d > 2\) they are run using a GPU while all other algorithm-problem combinations are run only using CPUs. The GPU used is an NVIDIA L40 (48GB RAM), and is paired with 10 cores from an AMD 9354P Genoa (Zen 4) processor and 59GB RAM. For the CPU only experiments, 8 cores of an Intel Xeon Platinum 8268 (Cascade Lake) processor are used with 19.6GB RAM.

\begin{table}[htp]
    \centering
    \caption{Lower and upper quartiles of the time required to optimize the acquisition functions on the 2D test problems from the middle two columns of \cref{fig:results-rare} at several points during the optimization. The value of \(n\) denotes the number of the sample currently being optimized. On the 2D test problems, all algorithms were run using 8 CPUs and a total of 29.6GB RAM.}
    \label{tab:timings-2d}
    \begin{tabular}{llrrrr}
    \toprule
        & & \thead{Ackley (2D)} & \thead{Branin (2D)} & \thead{Six hump\\camel (2D)} & \thead{Styblinski\\Tang (2D)} \\
    \midrule
        Discrete KG-MR & $n=7$ & 19s-23s & 16s-19s & 15s-19s & 17s-19s \\
         & $n=50$ & 25s-33s & 9s-65s & 7s-14s & 8s-30s \\
    \cline{1-6}
        One-shot KG-MR & $n=7$ & 21s-30s & 24s-33s & 20s-27s & 23s-27s \\
         & $n=50$ & 27s-38s & 9s-23s & 25s-35s & 22s-35s \\
    \cline{1-6}
        \multirow[t]{2}{*}{TS-MR} & $n=7$ & 1.0s-1.3s & 0.9s-1.1s & 1.2s-2.2s & 1.0s-1.4s \\
        & $n=50$ & 1.4s-1.5s & 1.7s-2.0s & 2.3s-2.7s & 1.6s-2.0s \\
    \cline{1-6}
        \multirow[t]{2}{*}{HC} & $n=7$ & 17s-23s & 16s-24s & 19s-29s & 15s-24s \\
        & $n=50$ & 9s-12s & 8s-11s & 11s-14s & 12s-15s \\
    \cline{1-6}
        EGRA & $n=7$ & 0.34s-0.44s & 0.29s-0.45s & 0.29s-0.44s & 0.31s-0.50s \\
        & $n=50$ & 0.46s-0.58s & 0.98s-1.43s & 0.46s-0.66s & 0.55s-0.77s \\
    \bottomrule
    \end{tabular}
\end{table}

\begin{table}[htp]
    \centering
    \caption{Lower and upper quartiles of the time required to optimize the acquisition functions on the higher-dimensional test problems from the middle two columns of \cref{fig:results-rare} at several points during the optimization. The value of \(n\) denotes the number of the sample currently being optimized. On these test problems, the KG-MR algorithms were run using a GPU (NVIDIA L40 48 GB RAM) and 10 CPUs with a total of 59GB RAM. The remaining test problems use 8 CPUs and a total of 29.6GB RAM.}
    \label{tab:timings-highdim}
    \begin{tabular}{llrr}
    \toprule
     & & \thead{Hartmann (6D)} & \thead{Styblinski Tang\\(10D), cropped} \\
    \midrule
    \multirow[t]{3}{*}{Discrete KG-MR (GPU)} & $n=16$ & 1.6s-2.3s &  \\
     & $n=100$ & 2.2s-4.4s & 2.4s-4.3s \\
     & $n=200$ & 2.4s-7.1s & 2.8s-6.0s \\
    \cline{1-4}
    \multirow[t]{3}{*}{One-shot KG-MR (GPU)} & $n=16$ & 17s-26s &  \\
     & $n=100$ & 27s-39s & 25s-38s \\
     & $n=200$ & 28s-44s & 29s-37s \\
    \cline{1-4}
    \multirow[t]{3}{*}{TS-MR} & $n=16$ & 1.6s-3.3s &  \\
    & $n=100$ & 4.2s-5.9s & 1.9s-2.8s \\
    & $n=200$ & 4.6s-5.8s & 2.6s-3.5s \\
    \cline{1-4}
    \multirow[t]{3}{*}{HC} & $n=16$ & 1s-13s &  \\
    & $n=100$ & 2.2s-2.9s & 1s-11s \\
    & $n=200$ & 1.9s-3.4s & 1.6s-4.2s \\
    \cline{1-4}
    \multirow[t]{3}{*}{EGRA} & $n=16$ & 1.2s-3.6s &  \\
     & $n=100$ & 1.5s-2.0s & 1.7s-2.5s \\
     & $n=200$ & 1.5s-2.2s & 2.3s-4.3s \\
    \bottomrule
    \end{tabular}
\end{table}

\section{Problems with Less Extreme Optimal Failure Probabilities} \label[appendix]{sec:non-extreme-fail-probs}
To complement the problem from the main paper, we also study the algorithms on problems where the optimal failure probability is not as extreme. In this case, we do not need importance sampling and also drop the logarithm from the definition of KG-MR. That is, we use \(R_n(\vx) = P_n(\vx)\) instead of Equation~\eqref{eq:log-fail-prob}.

To achieve non-extreme optimal failure probabilities, the GP test problems were modified so that the proportion of \(\sYfeas\) which is a failure increases with dimension.
Additionally the standard deviation of the Gaussian perturbations was increased to 0.1.
The kernel and random seed used to generate the problems was not changed, so the actual sample \(f\) remained the same.
These changes are summarized in \cref{tab:gp-problem-parameters-nonextreme}.
For the non-GP test problems, the threshold was kept the same but the standard deviation of the noise variance was increased to the values shown in \cref{tab:nongp-problem-parameters-nonextreme}.

The results are shown in \cref{fig:results-nonextreme}.
The conclusions are similar to the extreme case in \cref{fig:results-rare}, with some exceptions. EGRA shows improved performance in the GP problems, while osKG-MR now matches the performance of EGRA on the (uncropped) Styblinski-Tang (10D) problem.
TS-MR fails to converge to the same optimum as the other algorithms on the Styblinski-Tang (2D) problem.
Generally, all algorithms converge faster, since the higher-variance perturbation distribution means that the failure probability landscape is simpler and less affected by small perturbations in the limit state surface.

\begin{table}[h]
    \centering
    \caption{Parameters used to generate the GP test problems with non-extreme failure probabilities.}
    \label{tab:gp-problem-parameters-nonextreme}
    \begin{tabular}{rrrl}
    \toprule
        Dimension & Threshold, \(c\) & Failure volume & Perturbation distribution, \(\mathbb{P}_\rvu\) \\
    \midrule
        2  & 3.6 & 33\% & \(\mathcal{N}\bigl(0, 0.1^2 \mI \bigr)\) \\
        8 & -1.4 & 67\% & \(\mathcal{N}\bigl(0, 0.1^2 \mI \bigr)\) \\
        10 & -4.5 & 90\% & \(\mathcal{N}\bigl(0, 0.1^2 \mI \bigr)\) \\
    \bottomrule
    \end{tabular}
\end{table}

\begin{table}[h]
    \centering
    \caption{Perturbation distributions used for the non-GP test problems with non-extreme failure probabilities}
    \label{tab:nongp-problem-parameters-nonextreme}
    \begin{tabular}{ll}
    \toprule
        Problem & Perturbation distribution \\
    \midrule
        Branin (2D) & \(\mathcal{N}\bigl(0, 2.5^2 \mI\bigr)\)\\
        Six hump camel (2D) & \(\mathcal{N}\bigl(0, \mathrm{diag}(0.6, 0.3)^2\bigr)\) \\
        Styblinski-Tang (2D) & \(\mathcal{N}\bigl(0, \mathrm{diag}(1, 2)^2\bigr)\) \\
        Ackley (2D) & \(\mathcal{N}\bigl(0, 8^2 \mI\bigr)\)\\
        Quadratic (2D) & \(\mathcal{N}\bigl(0, 0.12^2 \mI \bigr)\) \\
        Hartmann (6D) & \(\mathcal{N}\bigl(0, 0.1^2 \mI\bigr)\) \\
        Hartmann (6D), high threshold & \(\mathcal{N}\bigl(0, 0.18^2 \mI \bigr)\) \\
        Styblinski-Tang (10D) & \(\mathcal{N}\bigl(0, \mathrm{diag}(0.8, 0.8, 0.8, 0.2, \dots, 0.2)^2 \bigr)\) \\
        Styblinski-Tang (10D), cropped & \(\mathcal{N}\bigl(0, \mathrm{diag}(0.8, 0.8, 0.8, 0.2, \dots, 0.2)^2 \bigr)\) \\
    \bottomrule 
    \end{tabular}
\end{table}

\begin{figure}[htp]
    \centering
    \includegraphics[width=\linewidth]{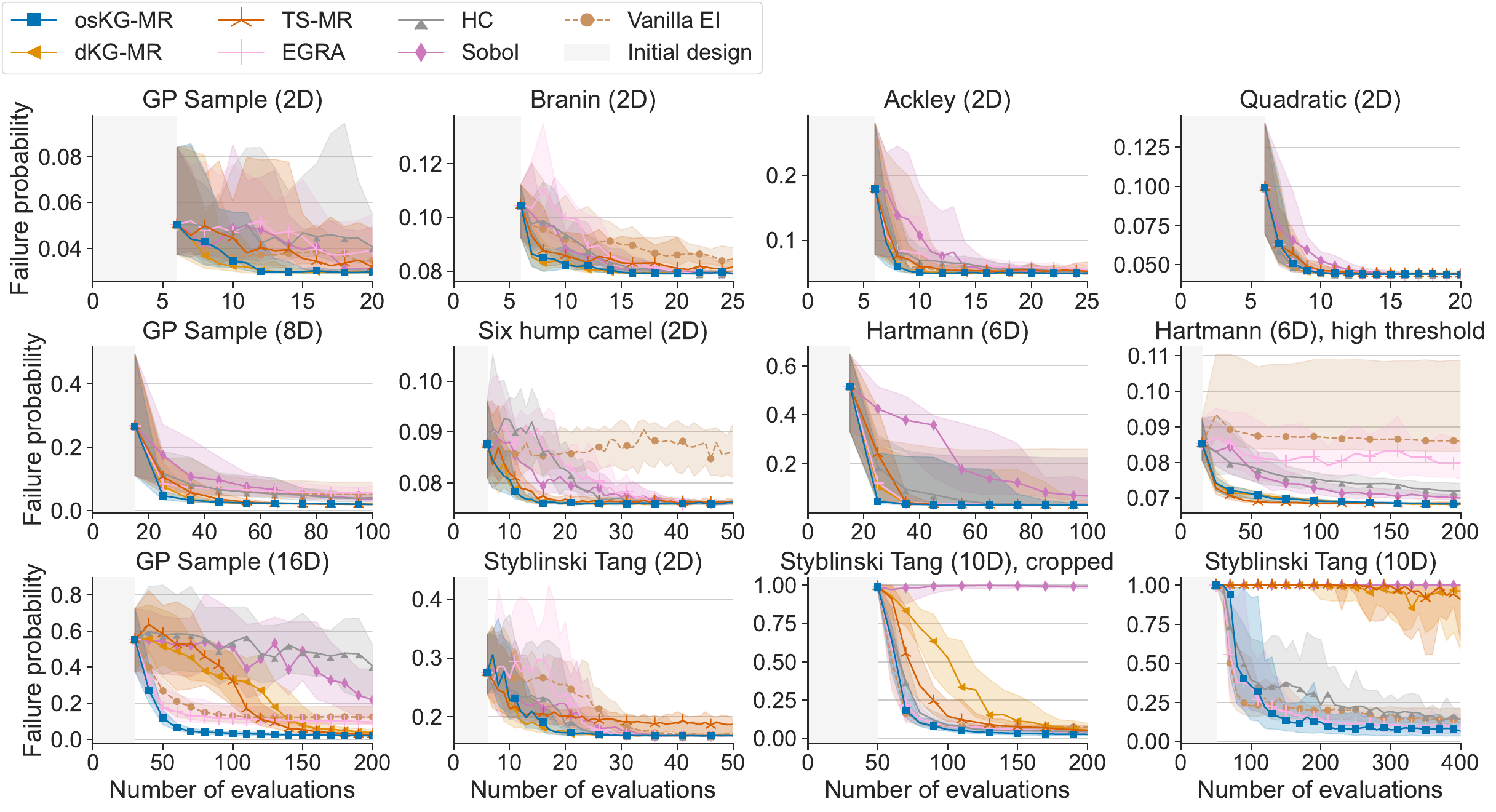}
    \caption[Results on the non-extreme reliability maximization test problems.]{Failure probabilities for the 12 test problems with a non-extreme optimal failure probability. As for \cref{fig:results-rare}, the first column contains the GP test problems, while the remaining columns are problems with potential for model mismatch. The last column contains problems for which KG-MR is not expected to have an advantage. The failure probability associated with the recommended solution is shown as a function of number of evaluations of the expensive black-box function. The solid lines show the median failure probability over 30 repeats and the shaded regions show the upper and lower quartiles.}
    \label{fig:results-nonextreme}
\end{figure}

\FloatBarrier
\section{Sensitivity Study} \label[appendix]{sec:sensitivity-study}
This section contains a study on the sensitivity of the one-shot and discrete KG-MR approximations to their parameters. The parameters studied are:
\begin{itemize}[nosep]
    \item the number \(N_u\) of qMC points used to estimate the expectation \(\E_\rvu\),
    \item the number \(N_v\) of qMC points used to estimate the expectation over the next potential sample \(v_{n+1}\),
    \item the number \(N_x = |\sXd|\) of points in the discretization of \(\sX\) (dKG-MR only),
    \item the number \(N_\text{restarts}\) of restarts used when optimizing the acquisition function with L-BFGS-B.
\end{itemize}
Sensitivity is tested on the problems in the middle two columns of \cref{fig:results-rare} and results are presented in \cref{fig:sensitivity-oskg,fig:sensitivity-dkg}. Both algorithms are reasonably robust to all parameters. However,
\begin{itemize}[nosep]
    \item dKG-MR is sensitive to the size of \(N_x\) in the cropped 10D Styblinski-Tang problem,
    \item a higher \(N_u\) has a small benefit to osKG-MR in the cropped 10D Styblinski-Tang,
    \item a lower \(N_v\) is slightly harmful to osKG-MR in the Hartmann (6D) problem.
\end{itemize}

\begin{figure}[htbp]
    \centering
    \includegraphics[width=\linewidth]{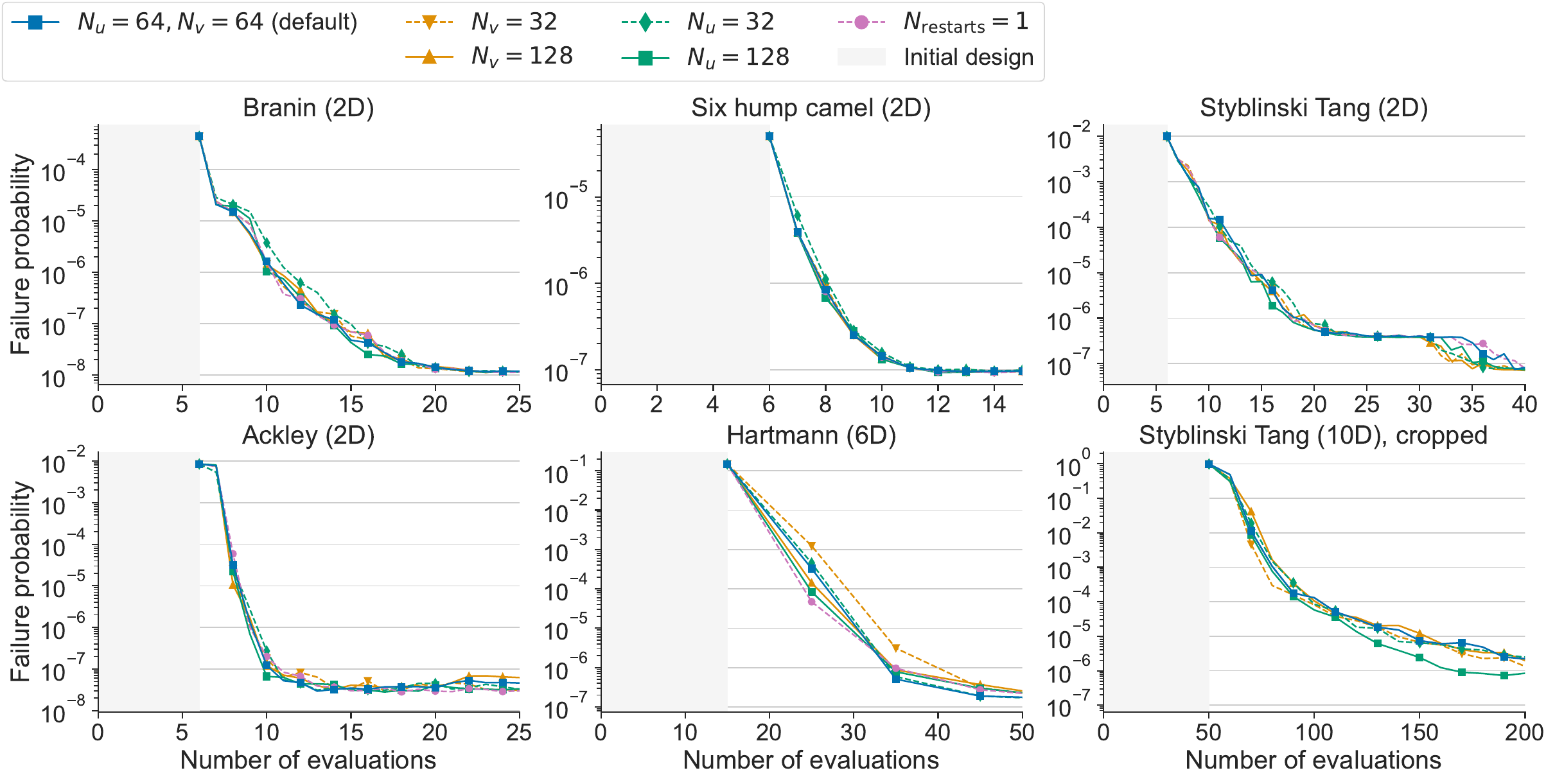}
    \caption{Sensitivity analysis on the one-shot KG-MR approximation (osKG-MR) on the problems with extremely small optimal failure probabilities in the middle two columns of \cref{fig:results-rare}. The failure probability associated with the recommended solution is shown as a function of the number of evaluations of the expensive black-box function. The solid lines show the median failure probability over 30 repeats. The upper and lower quartiles are omitted for clarity. Only parameters which differ from the default are shown in the legend.}
    \label{fig:sensitivity-oskg}
\end{figure}

\begin{figure}[htbp]
    \centering
    \includegraphics[width=\linewidth]{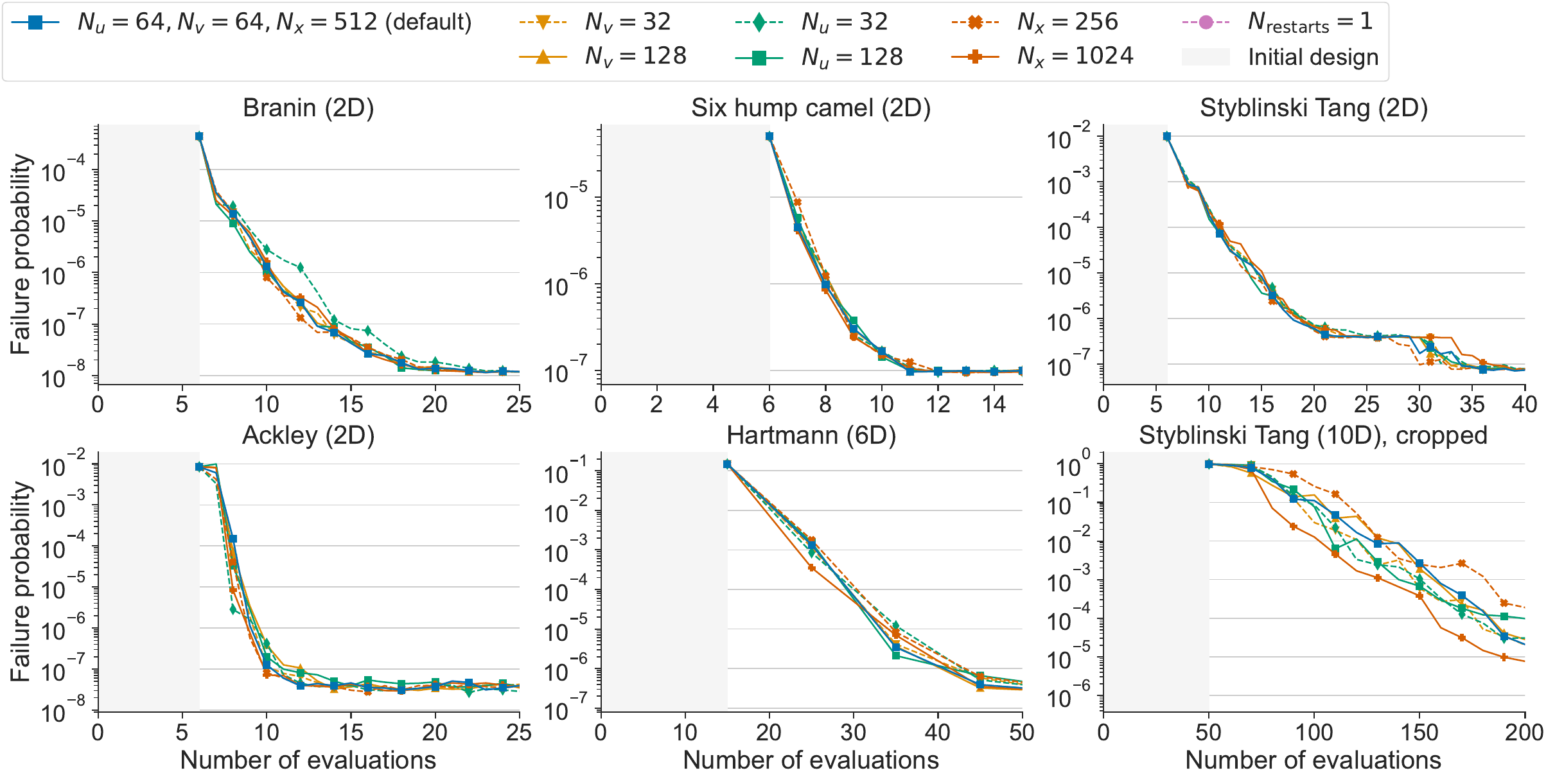}
    \caption{Sensitivity analysis on the discrete KG-MR approximation (dKG-MR) on the problems with extremely small optimal failure probabilities in the middle two columns of \cref{fig:results-rare}. The failure probability associated with the recommended solution is shown as a function of the number of evaluations of the expensive black-box function. The solid lines show the median failure probability over 30 repeats. The upper and lower quartiles are omitted for clarity. Only parameters which differ from the default are shown in the legend.}
    \label{fig:sensitivity-dkg}
\end{figure}

%%%%%%%%%%%%%%%%%%%%%%%%%%%%%%%%%%%%%%%%%%%%%%%%%%%%%%%%%%%%%%%%%%%%%%%%%%%%%%%
%%%%%%%%%%%%%%%%%%%%%%%%%%%%%%%%%%%%%%%%%%%%%%%%%%%%%%%%%%%%%%%%%%%%%%%%%%%%%%%

\end{document}